\documentclass{article}

\usepackage[preprint]{neurips_2026}

\usepackage[utf8]{inputenc}
\usepackage[T1]{fontenc}

\usepackage{float}
\usepackage{caption}
\usepackage{subcaption}
\usepackage{booktabs}
\usepackage{graphicx}
\usepackage{microtype}
\usepackage{url} 

\usepackage{amsmath, amssymb, amsthm}
\usepackage{mathtools}
\usepackage{bm}
\usepackage{aliascnt}

\usepackage{xcolor}
\usepackage{tikz}
\usetikzlibrary{arrows.meta, calc, positioning}

\usepackage{algorithm}
\usepackage{algpseudocode}

\usepackage{cleveref}

\definecolor{niceTeal}{RGB}{0, 128, 128}
\definecolor{niceOrange}{RGB}{255, 140, 0}
\definecolor{nicePurple}{RGB}{128, 0, 128}

\theoremstyle{plain}
\newtheorem{theorem}{Theorem}[section]

\newaliascnt{lemma}{theorem}
\newtheorem{lemma}[lemma]{Lemma}
\aliascntresetthe{lemma}
\crefname{lemma}{lemma}{lemmas}
\Crefname{lemma}{Lemma}{Lemmas}

\newaliascnt{corollary}{theorem}
\newtheorem{corollary}[corollary]{Corollary}
\aliascntresetthe{corollary}
\crefname{corollary}{corollary}{corollaries}
\Crefname{corollary}{Corollary}{Corollaries}

\newaliascnt{definition}{theorem}
\newtheorem{definition}[definition]{Definition}
\aliascntresetthe{definition}
\crefname{definition}{definition}{definitions}
\Crefname{definition}{Definition}{Definitions}

\newaliascnt{example}{theorem}
\newtheorem{example}[example]{Example}
\aliascntresetthe{example}
\crefname{example}{example}{examples}
\Crefname{example}{Example}{Examples}

\newaliascnt{assumption}{theorem}
\newtheorem{assumption}[assumption]{Assumption}
\aliascntresetthe{assumption}
\crefname{assumption}{assumption}{assumptions}
\Crefname{assumption}{Assumption}{Assumptions}

\newcommand{\st}{\text{subject to}}

\newcommand{\R}{\mathbb{R}}

\title{Multi-ResNets for Subspace Preconditioning in Constrained Optimization}

\author{
\begin{tabular}{ccc}
Merve Karakas\thanks{Alphabetical ordering, authors contributed equally to this work.} &
Christopher J. Williams\footnotemark[1] &
Emmanuel O. Balogun \\
UCLA &
University of Oxford &
Sunrun\thanks{Work done while at Tapestry, Google.} \\
{\small\texttt{mervekarakas@ucla.edu}} &
{\small\texttt{williams@stats.ox.ac.uk}} &
{\small\texttt{ebalogun@alumni.stanford.edu}} \\[0.8em]
Sadegh Sadeghi Tabas &
Christian Brown &
Nikhil Rao \\
Tapestry, Google &
Tapestry, Google &
Tapestry, Google \\
{\small\texttt{sadeghitabas@google.com}} &
{\small\texttt{cshimizubrown@google.com}} &
{\small\texttt{raonik@google.com}}
\end{tabular}
}

\begin{document}

\maketitle

\begin{abstract}

We propose MResOpt, a staged residual neural network architecture for constrained optimization problems. Our architecture fits within predict-complete-correct pipelines and decomposes constraint satisfaction by priority through intermediate re-completion and stage-aware losses. The framework enables domain-informed ordered constraint satisfaction which allows the network to utilize ordinal structure when present. Under an idealized infinite-width regime, we show that our design behaves as sequential Gaussian Process regression.
%with a safe fallback to the highest-priority cumulative feasible set when lower-priority standalone constraints are incompatible with it.
On synthetic QP, QCQP, and SOCP benchmarks, the staged architecture improves high-priority constraint satisfaction across convex and non-convex settings. On line-flow-constrained AC optimal power flow, we introduce a physics-motivated constraint ordering and show that MResOpt supports a learned division of labor that keeps iterates on the equality manifold, achieving substantially lower high-priority violation than reprojected baselines while remaining computationally efficient.
\end{abstract}

\section{Background and Introduction}

Neural networks are used as fast surrogates for parametric constrained optimization problems, where the problem structure is fixed but parameters vary across instances \cite{donti2021dc3,nguyen2025fsnet}.
Existing approaches enforce constraints through three main paradigms. 
\textit{Penalty methods} incorporate constraint violations into the training loss with no feasibility guarantee. 
\textit{Projection-based methods} map network outputs onto the feasible set via differentiable optimization layers \cite{amos2017optnet,agrawal2019differentiable}. 
\textit{Iterative methods} integrate differentiable completion and correction steps, achieving feasibility through repeated solver-based projections.
No approach is ubiquitous; the choice is problem dependent and combination methods are evolving.
In this work, we design architectures with inductive biases that prioritize user deemed higher-importance constraints in an iterative-projection framework for problems with ordinal constraint structure.
% We study problems with ordinal constraint structure and design architectures with inductive biases that prioritize higher-importance constraints in a iterative-projection framework. 
Such structure arises naturally in physical systems--e.g., in AC optimal power flow (ACOPF), physical laws must be strictly satisfied, while operational constraints may be relaxed to preserve feasibility.

We target our iterative-projective methodology for established benchmarks, inclusive of ACOPF, set by DC3 \cite{donti2021dc3} which introduced the predict–complete–correct paradigm for constrained learning. 
DC3 has since set the general framework for methods targeting ACOPF, in which equality constraints are enforced via a completion step, while inequality constraints are handled through gradient-based correction. 
Subsequent work has highlighted limitations of this correction mechanism. 
FSNet \cite{nguyen2025fsnet}, evaluating DC3 on smooth convex QP, QCQP, and SOCP problems, reports strong sensitivity to the correction step size. 
Homeomorphic Projection \cite{liang2024hproj} further studies this correction step (termed “D-Proj”) as a post-processing baseline on ACOPF, achieving only 78.6\% feasibility on the 30-bus system despite per-instance tuning.
More broadly, several recent methods—including QCQP-Net \cite{chen2024qcqpnet} and DeepOPF-NGT \cite{wu2024deepopfngt}—note the fragility of DC3 on nonlinear problems and propose alternative mechanisms for enforcing feasibility.

% We revisit DC3 on ACOPF to establish a reliable baseline for our proposed methodology. 
Consistent with the reported instabilities, we find that DC3 admits no effective active-correction regime: weak correction is inert, while strong correction drifts off the equality manifold. 
We identify a discrepancy between the continuous-time analysis underlying DC3 and its discretized implementation, which is currently unaccounted for and has been noted in subsequent work (see Appendix for a discretization analysis). 
For our first contribution, we introduce a minimally modified baseline, DC3+recomp, which adds a projection step after each correction to account for this discretization error. 
% This modification does not strengthen the correction mechanism itself, but stabilizes it by preserving feasibility of the equality constraints.
Despite addressing this known open issue, we leverage the improvement to build stronger baselines for our primary hypothesis: we posit that neural networks with inductive biases for domain-informed ordered constraint satisfaction improve predict–complete–correct methods when constraints exhibit ordinal structure, such as the case for ACOPF.
Complementing DC3, we propose a general framework applicable to any predict–complete–correct pipeline with separable completion and correction operators.
For benchmarking, we instantiate it on DC3 as is standard and on DC3+recomp, but the same architectural decomposition applies to other methods sharing the predict-complete-correct framework. 
Our contributions that support this hypothesis are as follows. 

\begin{figure}
    \centering
    \resizebox{\textwidth}{!}{
        \begin{tikzpicture}[scale=1.0, >=Stealth]

            % --- Global Styles ---
            \tikzset{
                grid lines/.style={color=gray!20, thin},
                axis/.style={->, thick, color=gray!60},
                h1 region/.style={fill=niceTeal!30, draw=niceTeal!80, dashed, thick, fill opacity=0.6},
                h2 region/.style={fill=niceOrange!30, draw=niceOrange!80, dashed, thick, fill opacity=0.6},
                target pt/.style={circle, fill=nicePurple, inner sep=2pt},
                good arrow/.style={->, ultra thick, color=nicePurple!80, dotted},
                bad arrow/.style={->, ultra thick, color=gray!50, dotted},
                lbl/.style={font=\footnotesize\sffamily},
                sub caption/.style={font=\bfseries\sffamily, anchor=north, yshift=-0.8cm}
            }

            \newcommand{\drawBaseGrid}{
                \draw[grid lines] (0,0) grid (6,6);
                \draw[axis] (0,0) -- (6.2,0);
                \draw[axis] (0,0) -- (0,6.2);
            }

            % ==========================================
            % SCENARIO A: subspace preconditioning SUCCESS
            % ==========================================
            \begin{scope}[xshift=0cm]
                \node[anchor=south, font=\bfseries] at (3,6.2) {A1. Step 1: Circle};
                \drawBaseGrid
                \filldraw[h1 region] (3,3) circle ({sqrt(3)});
                \draw[good arrow] (0,0) -- (3.5,3.5);
                \node[lbl, nicePurple, anchor=south west] at (0.9,0.5) {Init to $W_1$};
            \end{scope}

            \begin{scope}[xshift=6.5cm]
                \node[anchor=south, font=\bfseries] at (3,6.2) {A2. Step 2: Full System};
                \drawBaseGrid
                \fill[h1 region, draw=none] (3,3) circle ({sqrt(3)});
                \filldraw[h2 region, smooth cycle] plot coordinates {(3, 4.45) (4.005, 4.05) (3, 3.31) (1.995, 4.05)};
                \filldraw[h2 region, smooth cycle] plot coordinates {(3, 2.69) (4.005, 1.95) (3, 1.55) (1.995, 1.95)};
                \node[target pt] (T) at (4,4) {};
                \node[lbl, right, nicePurple] at (T) {Global (4,4)};
                \draw[good arrow] (3.5,3.5) -- (T);
            \end{scope}

            % ==========================================
            % SCENARIO B: SIMULTANEOUS/BAD ORDER FAILURE
            % ==========================================
            \begin{scope}[xshift=13.5cm]
                \node[anchor=south, font=\bfseries] at (3,6.2) {B1. Step 1: Ovals};
                \drawBaseGrid
                \filldraw[h1 region, smooth cycle] plot coordinates {(3, 4.45) (4.005, 4.05) (3, 3.31) (1.995, 4.05)};
                \filldraw[h1 region, smooth cycle] plot coordinates {(3, 2.69) (4.005, 1.95) (3, 1.55) (1.995, 1.95)};
                \coordinate (Local) at (3, 2.1);
                \node[target pt, fill=gray] at (Local) {};
                \draw[bad arrow] (0,0) -- (Local);
                \node[lbl, gray, anchor=north west] at (0.5,1.5) {Trapped};
            \end{scope}

            \begin{scope}[xshift=20cm]
                \node[anchor=south, font=\bfseries] at (3,6.2) {B2. Step 2: Full System};
                \drawBaseGrid
                \fill[h2 region, draw=none, fill opacity=0.3] (3,3) circle ({sqrt(3)});
                \filldraw[h1 region, smooth cycle] plot coordinates {(3, 4.45) (4.005, 4.05) (3, 3.31) (1.995, 4.05)};
                \filldraw[h1 region, smooth cycle] plot coordinates {(3, 2.69) (4.005, 1.95) (3, 1.55) (1.995, 1.95)};
                \node[target pt, fill=gray] (LO) at (3, 2.1) {};
                \node[lbl, right, gray] at (LO) {Suboptimal};
                \node[font=\sffamily\bfseries, color=red] at (4,4) {$\times$};
            \end{scope}

            % Sub-captions
            \node[sub caption] at (6.25, 0) {(a) Scenario A: Subspace Preconditioning};
            \node[sub caption] at (19.75, 0) {(b) Scenario B: Simultaneous/Unordered Failure};

        \end{tikzpicture}
    }
    \caption{
    Visualization of \Cref{ex:basin-trap}.
    When ordinal structure is known, this can help navigate nonlinear and nonconvex through sequential subspace preconditioning through initialization. 
    Subspace preconditioning (left) navigates nonlinear and nonconvex global topology via lexicographic priority, whereas simultaneous or incorrectly ordered learning (right) succumbs to local minima entrapment. }
    \label{fig:subspace preconditioning_comparison}
\end{figure}

\paragraph{Contributions} 
We propose MResOpt (Multi-ResNet Optimization), a staged residual schema for predict–complete–correct pipelines that decomposes constraints by priority through intermediate re-completion and tier-aware losses; while the schema is general, we instantiate it on DC3. We provide an infinite-width analysis of the staged architecture, proving that detached stages behave as sequential Gaussian process regressions and that bounded activations guarantee a safe fallback to the highest-priority cumulative feasible set when lower-priority standalone constraints are incompatible with it.
Empirically, on controlled QP, QCQP, and SOCP benchmarks, the staged architecture reshapes the high-priority / low-priority / cost frontier: detach acts as a strict-priority ablation, while nodetach performs better under nonlinear coupling. On line-flow-constrained ACOPF, we diagnose that vanilla DC3 has no useful active-correction regime, with weak correction being inert and strong correction drifting off the AC equality manifold. Finally, we introduce structured baselines (DC3+manifold-aware and DC3+recomp) to isolate the effects of gradient direction and reprojection, and show that MResOpt achieves the strongest stable high-priority constraint / cost trade-off, with results generalizing to the IEEE 57-bus system (\Cref{sec:appendix-57bus}).

\paragraph{Related work}

\textit{Iterative methods and DC3 limitations.}
We focus on predict–complete–correct methods, particularly DC3 \cite{donti2021dc3}. Prior work reports brittle correction on nonlinear problems \cite{nguyen2025fsnet,liang2024hproj,chen2024qcqpnet,wu2024deepopfngt}; as discussed in the introduction, we diagnose this failure mode and introduce stabilized baselines. Our method is complementary, modifying how correction is structured across constraint priorities.
\textit{Hierarchical architectures.}
Our approach relates to multi-resolution and hierarchical networks that refine solutions from coarse to fine \cite{ronneberger2015unet,williams2023unified,falck2022multi}. Such architectures have been used in PDE solvers and cascaded refinement settings \cite{Han2023Hierarchical,DIMOLA202658,franco2023mesh,chen2017photographic}. We adapt this perspective to constrained optimization by aligning stages with a filtration over constraint sets.
\textit{Preconditioning.}
Our method is a neural analog of classical preconditioning, where structured coarse problems are solved before finer ones (e.g., multigrid, domain decomposition). Here, early stages enforce high-priority (e.g., physical) constraints, providing a stable manifold on which later stages refine lower-priority objectives.

\subsection{Predict–Complete–Correct Neural Network Proposals for Constrained Optimization} 

We seek to train a neural network $N_{\theta} : \mathbb{V} \mapsto  \mathbb{W}$, parameterized by $\theta \in \R^p$, to solve:
\begin{align}
    \min_{w \in \mathbb{W}} f(w \mid v) \quad \st \quad g(w \mid v) \leq 0, \quad h(w \mid v) = 0,
\end{align}
where $f > 0$ is a coercive function, $g : \mathbb{W} \times \mathbb{V} \to \R^{m_1}$, and $h : \mathbb{W} \times \mathbb{V} \to \R^{m_2}$. 
Here, $v \in \mathbb{V} = \mathbb{R}^n$ denotes the physical system state mapping to $\mathbb{W} = \mathbb{R}^d$; we later extend this formulation to graph domain and co-domains to make this applicable to ACOPF.
Assuming continuity of $f, g$, and $h$, the neural network $N_{\theta}$ maps the input state $v$ to a proposed optimal solution $w = N_{\theta}(v)$.

The DC3 framework \cite{donti2021dc3}, which employs a neural network $N_{\theta}: \R^n \to \R^{D}$ to propose a candidate solution $z \in \R^{D}$ with reduced dimensionality $D < d$ (e.g., $D = d - m_2$). 
This reduction constrains the networks output to match the solution manifold's degrees of freedom, preventing deviation at initialization.
Equality constraints $h$ are satisfied by completing the partial proposal $\tilde{w} = z$ via a projection operator $\mathcal{T}(z) = w$ onto the solution manifold. 
Subsequently, inequality constraints are enforced by performing gradient descent\footnote{this ensures constraint satisfaction in the continuum, but breaks upon discretization.} directly along this manifold to obtain $\hat{w} = \Pi(\tilde{w})$, as detailed in \Cref{alg:DC3train}.
The process comprises three distinct stages: a neural network proposal, numerical completion, and gradient-based correction. 
Implicit differentiation enables backpropagation through the solver, while gradient descent operates along the solution manifold to maintain equality constraints within the fine-mesh limit of the learning step size (Section 3.1 and 3.2 \cite{donti2021dc3}).

These approaches assume that generated points are both \emph{completable} and \emph{correctable}. 
These proposals must fall within a local basin of attraction of a desirable minimum; failing this, the procedure risks divergence or entrapment in poor local optima.
We aim to design neural networks with an inductive bias towards proposals $z_i$ that resolve high priority physical constraints first, yielding solutions $\hat{w}_i$ optimized for priority metrics. 
By embedding this hierarchy, we ensure proposals maintain physical validity in overdetermined regimes: even when a global optimum is unattainable, the system defaults to a physically valid state rather than an invalid compromise.

\subsection{Subspace Preconditioning Prediction Steps}
The performance of predict–complete–correct methods depends critically on the quality of the initial neural proposal. At initialization, the network’s inductive bias governs which regions of the solution space are explored; poor bias can lead to ineffective search or convergence to suboptimal basins. To address this, hierarchical architectures such as U-Nets and multi-resolution models \cite{ronneberger2015unet,DIMOLA202658,Han2023Hierarchical,chen2017photographic,franco2023mesh,falck2022multi} impose structure through a filtration of approximation spaces. These spaces encode ordinal priorities that cannot be reliably enforced through scalar loss functions in overdetermined or infeasible regimes (see \Cref{lemma:infeasible-cost}).

We adopt this perspective via a Multi-ResNet architecture \cite[Defn 3]{williams2023unified}, where a descending filtration $\{W_i\}_{i=1}^{r}$ induces a subspace-preconditioned search. The coarse space $W_1$ captures high-priority structure—e.g., inviolable physical constraints—while subsequent spaces progressively refine the solution toward the full target space $W_r$. This hierarchical bias improves optimization by guiding early iterates toward well-conditioned regions of the landscape. As illustrated in \Cref{ex:basin-trap} (\Cref{fig:subspace preconditioning_comparison}), enforcing all constraints simultaneously can lead to basin trapping in nonconvex, disconnected regions, whereas the staged approach recovers the global optimum.

In ACOPF, this ordering arises naturally from the physics. Equality constraints define the feasible manifold and are enforced exactly via completion; box bounds introduce local, smooth corrections; and line flow limits impose nonlinear, cross-variable constraints that create the non-smooth geometry responsible for basin trapping. By enforcing higher-priority constraints first, early stages operate on smoother subproblems and provide well-conditioned initializations for later refinement—analogous to coarse-to-fine strategies in multigrid and Galerkin methods. If the full feasible set is unreachable, the solution remains anchored in the most physically meaningful subspace (e.g., $W_1$). We validate the importance of this ordering via a 3-bus ablation (\Cref{sec:appendix-ordering-ablation}), where reversing the hierarchy eliminates the advantage of our method.

\section{Multi-ResNets as Subspace Preconditioners for Optimization}

\subsection{Lexicographic Constraint Importance}
Constructing the filtration requires ranking both equality and inequality constraints by priority. 
We unify notation by expressing all constraints in equality form $C(w \mid v) = 0$, where $C(w \mid v) = \max(0, g(w \mid v))$ for inequalities and $C(w \mid v) = h(w \mid v)$ for equalities. 
The target feasible set $W_r$ is embedded within the following nested structure induced by the constraints $\{C_j\}_{j=1}^n$:
\begin{align}\label{eq:filtration}
W{r} \subset W_{r-1} \subset \dots \subset W_{1} \subset W_0 = \mathbb{W},
\qquad
W_i \coloneqq \{ w \in \mathbb{W} \mid C_j(w) = 0 \text{ for all } j \leq i \},
\end{align}
where $\mathbb{W}$ denotes the ambient solution space. 
This construction yields a descending filtration $\{W_i\}_{i=0}^r$ of the search space, enabling the application of the Multi-ResNet framework \cite[Defn.~1]{williams2023unified}.
The cumulative set $W_i$ includes all constraints through priority $i$, so $W_i \subseteq W_{i-1}$. 
Define the standalone set, $S_i \coloneqq \{w \in \mathbb{W}\mid C_i(w)=0\}$, that contains only constraint $i$. 
It naturally supports a greedy sequential refinement procedure, where solutions in $W_{i-1}$ initialize the search in $W_i$.
We formalize this hierarchy in \Cref{assumption:lex-order} by positing a lexicographic ordering of constraints.

\subsection{Cumulative Feasibility and Safe Fallback under Standalone-Set Conflicts}
\label{subsec:fallback}
Lexicographic ordering provides robustness in overdetermined regimes where global feasibility may fail. 
Let $W_1$ encode the highest-priority cumulative feasible set (e.g., the physical / equality manifold in ACOPF) and let $S_2$ be the standalone set defined by a lower-priority constraint $C_2$ (e.g., operational box bounds or line-flow limits). It is possible that $W_1 \cap S_2 = \emptyset$, in which case no element of $W_1$ also satisfies $C_2$. Combining these objectives can yield solutions that satisfy neither, converging to invalid interpolations (see \Cref{lemma:infeasible-cost}).
By prioritizing $W_1$, we obtain a safe fallback to the highest-priority cumulative feasible set: solutions remain in $W_1$ even when subsequent standalone constraints are incompatible with it. In ACOPF, this guarantees adherence to physical laws rather than violating them for operational objectives (see \Cref{ex:physical-fallback}, \Cref{fig:subspace preconditioning_infeasibility}).
The nested structure of ${W_i}$ induces a monotonic refinement of feasible regions, ensuring consistency as constraints tighten (see \Cref{cor:physical-first}). We next translate this subspace preconditioning principle into a Multi-ResNet architecture that embeds this hierarchy directly into the forward pass.

\subsection{Conditional Residual Networks as Subspace Preconditioners}

To internalize the lexicographic hierarchy as an inductive bias, \cite{williams2023unified} structures the network as a composition of sequential operators. In the Multi-ResNet framework, each stage refines the previous estimate to satisfy progressively more restrictive levels of the constraint filtration $W_1 \supseteq W_2 \supseteq \dots \supseteq W_r$, formalized via a sequence of \textit{Conditional ResNets}.

\begin{definition}[ResNet, Conditional ResNet \cite{he2016deep,williams2023unified}]
Given input space $\mathbb{V}$ and solution space $\mathbb{W}$, a mapping $\mathcal{R} : \mathbb{W} \times \mathbb{V} \to \mathbb{W}$ is a \emph{ResNet} preconditioned on $\mathcal{R}_{\text{pre}} : \mathbb{V} \to \mathbb{W}$ if
\begin{align}
\mathcal{R}(w \mid v) &= \mathcal{R}_{\text{pre}}(v) + \mathcal{R}_{\text{res}}(w \mid v),
\end{align}
where $\mathcal{R}_{\text{res}} : \mathbb{W} \times \mathbb{V} \to \mathbb{W}$. A \emph{Conditional ResNet} defines a sequence $w^{(k)} = w^{(k-1)} + \mathcal{R}_{\text{res}}^{(k)}(w^{(k-1)} \mid v)$ with $w^{(1)} = \mathcal{R}_{\text{res}}^{(1)}(v)$, where each stage is conditioned on $v \in \mathbb{V}$ and $w^{(k-1)} \in \mathbb{W}$.
\end{definition}

Composing these operators yields the Multi-ResNet architecture \cite[Def.~3]{williams2023unified}, which can be viewed as a U-Net with encoder space fixed to $\mathbb{V}$ at all resolutions\footnote{set $V_i = \mathbb{V}$ for all $i$.} and decoder subspaces defined by the filtration $\{W_i\}_{i=1}^r$. Initializing the forward pass on the physical manifold $W_1$ ensures subsequent residual blocks act as corrective refinements, with $w^{(k)} = w^{(k-1)} + \mathcal{R}_{\text{res}}^{(k)}(w^{(k-1)} \mid v)$ and $w^{(k-1)}$ serving as a subspace preconditioner for stage $k$. 

This enforces the filtration through three mechanisms: (i) \textit{physical preconditioning}, where $w^{(1)} \in W_1$ satisfies inviolable constraints, anchoring the search on the physical manifold and restricting subsequent updates to feasible directions; (ii) \textit{sequential refinement}, where each $\mathcal{R}_{\text{res}}^{(k)}$ incrementally transitions $W_{k-1} \to W_k$, resolving lower-priority constraints on a conditioned subspace and avoiding unsafe compromises from simultaneous enforcement (\Cref{lemma:infeasible-cost}, \Cref{cor:physical-first}); and (iii) \textit{safe fallback}, where if stage $k$ encounters infeasible or overdetermined constraints, the iterate remains at $w^{(k-1)} \in W_{k-1}$, preserving higher-priority feasibility.

\section{MResOpt: Multi-ResNets for Optimization with Ordered Constraints}

\subsection{MResOpt and MResOpt-det Architecture and Forward Pass}

\begin{figure}[H]
    \centering
    % --- Left Side: TikZ Picture ---
    \begin{minipage}[c]{0.55\textwidth} % Increased width for diagram
        \centering
        % Scalebox shrinks the entire diagram to fit
        \scalebox{0.6}{
        \begin{tikzpicture}[
            scale=1, every node/.style={scale=1}, % Internal scale reset
            >=Stealth,
            block/.style={draw, rectangle, minimum width=1cm, minimum height=1cm, thick, font=\small, align=center},
            detach/.style={draw, rectangle, minimum width=0.8cm, minimum height=0.5cm, font=\scriptsize, fill=gray!10},
            var/.style={font=\small},
            dots/.style={font=\LARGE},
            plus/.style={draw, circle, inner sep=1pt, font=\scriptsize, thick},
            skip/.style={->, thick, dashed}
        ]
            % --- Stage 1: Initialization ---
            \node[var] (x) at (0, 0) {$v$};
            \node[block, right=0.8cm of x] (N1) {$N_1$};
            \node[var, right=0.5cm of N1] (z1) {$z_1$};
            
            % Main flow x -> N1 -> z1
            \draw[->, thick] (x) -- (N1);
            \draw[->] (N1) -- (z1);
            
            % Stage 1 Completion
            \node[var, above=0.5cm of z1] (ztilde1) {$\tilde{w}_1$};
            \node[var, right=0.4cm of ztilde1] (zhat1) {$\hat{w}_1$};
            \draw[->] (z1) -- (ztilde1) node[midway, left, font=\tiny] {$\mathcal{T}_1$};
            \draw[->] (ztilde1) -- (zhat1) node[midway, above, font=\tiny] {$\mathcal{P}_1$};
        
            % Stage 1 Detach
            \node[detach, below=0.5cm of z1] (det1) {$\text{Detach}$};
            \draw[->] (z1) -- (det1);
        
            % --- Ellipsis ---
            \node[dots, right=0.5cm of det1] (dots) {$\dots$};
            \draw[->] (det1) -- (dots);
        
            % --- Stage i: ResNet Step ---
            \node[block, right=1.0cm of dots] (Ni) {$N_i( \cdot | v)$};
            \node[plus, right=0.4cm of Ni] (sumi) {$+$};
            \node[var, right=0.4cm of sumi] (zi) {$z_i$};
            
            % Residual Path (z_{i-1} -> sum)
            \draw[->] (dots) -- +(0.2,0) |- (Ni.west); % Input z_{i-1} to Ni
            \draw[->] (dots) -- +(0.2,0) |- +(0.2,1.2) -| (sumi); % Skip z_{i-1} over block
            
            \draw[->] (Ni) -- (sumi);
            \draw[->] (sumi) -- (zi);
        
            % Stage i Completion
            \node[var, above=0.5cm of zi] (ztildei) {$\tilde{w}_i$};
            \node[var, right=0.4cm of ztildei] (zhati) {$\hat{w}_i$};
            \draw[->] (zi) -- (ztildei) node[midway, left, font=\tiny] {$\mathcal{T}_i$};
            \draw[->] (ztildei) -- (zhati) node[midway, above, font=\tiny] {$\mathcal{P}_i$};
        
            % Stage i Detach
            \node[detach, below=0.5cm of zi] (deti) {$\text{Detach}$};
            \draw[->] (zi) -- (deti);
        
            % --- CONDITIONING BUS (Skip Connections) ---
            % 1. Drop down from x (LOWERED to -2.5 as requested)
            \draw[thick] (x.south) -- ++(0,-2.5) coordinate (bus_start);
            
            % 2. Bus line across the bottom
            \draw[thick] (bus_start) -- (bus_start -| Ni) coordinate (bus_end);
            
            % 3. Dashed Arrow up into N1
            % \draw[skip] (bus_start -| N1) -- (N1.south);
            
            % 4. Dashed Arrow up into Ni
            \draw[skip] (bus_end) -- (Ni.south) node[midway, right, font=\scriptsize] {Cond.};
        
            % Labels
            \node[below=1.8cm of Ni, font=\scriptsize\bfseries] {Stage $i$ (Refinement)};
        
        \end{tikzpicture}
        }
        \captionof{figure}{DC3 Multi-ResNet Architecture, a form of U-Net, see \Cref{fig:U-multi-resnet}}
        \label{fig:subspace preconditioning_diagram}
    \end{minipage}
    \hfill
    % --- Right Side: Algorithm ---
    \begin{minipage}[c]{0.40\textwidth} % Reduced width for algo
        \vspace{0pt} % Align tops
        \hrule
        \vspace{2pt}
        \captionof{algorithm}{Forward Pass}
        \label{alg:subspace preconditioning_side}
        \vspace{-7pt}
        \hrule
        \begin{algorithmic}[1]
            \scriptsize % Use scriptsize to fit width
            \State \textbf{Input:} $v$, $\{\theta_i\}_{i=1}^r$
            \State $z_1 \gets N_{\theta_1}(v)$
            \State $\hat{w}_1 \gets \mathcal{P}_1(\mathcal{T}_1(z_1))$
            \State $\mathcal{L}_{tot} \gets \mathcal{L}_1(\hat{w}_1)$
            \For{$i = 2$ \textbf{to} $n$}
                \State \textbf{Detach} $z_{i-1}$ \hfill $\triangleright$ Skip if \Cref{sec:detach} applicable
                \State $z_i \gets z_{i-1} + N_{\theta_i}(z_{i-1} \mid v)$
                \State $\hat{w}_i \gets \mathcal{P}_i(\mathcal{T}_i(z_i))$
                \State $\mathcal{L}_{tot} \mathrel{+}= \mathcal{L}_i(\hat{w}_i)$
            \EndFor
            \State \textbf{Return} $\mathcal{L}_{tot}$
        \end{algorithmic}
        \vspace{2pt}
        \hrule
    \end{minipage}
\end{figure}

MResOpt integrates the implicit constraint solvers of DC3 with the subspace preconditioning framework of Multi-ResNets to solve constrained optimization problems while prioritizing ordinal structure.
A non-residual anchoring step, $z_1 = N_1(v)$, produces a latent proposal whose subsequent completion $\mathcal{T}_1(z_1)$ lands on the equality manifold $W_1 = \{w | C_{\mathrm{eq}}(w) = 0\}$ (in ACOPF, the AC power-flow manifold).
This establishes a valid base-manifold solution for subsequent stages to build hierarchical residuals, embedding the system's physical laws as a structural inductive bias.

For subsequent stages $i \in \{2, \dots, n\}$, the architecture implements a conditional ResNet update: $z_i = \text{Detach}(z_{i-1}) + N_i(\text{Detach}(z_{i-1}) \mid v)$ for MResOpt-det and $z_i = z_{i-1} + N_i(z_{i-1} \mid v)$ for MResOpt.
This stop-gradient operation enforces a greedy priority, preventing lower-level operational constraints from compromising the physical consistency of the foundational manifold $W_1$. 
Latent estimates $z_i$ are then mapped toward feasible space via sequence-specific completion and correction operators: $\tilde{w}_i = \mathcal{T}_i(z_i)$ and $\hat{w}_i = \mathcal{P}_i(\tilde{w}_i)$, targeting $\hat{w}_i \in W_i$.

The completion operator $\mathcal{T}_i$ generates the auxiliary variables for the $i$-th level, while the correction operator $\mathcal{P}_i$ reduces violation of $C_i$ through gradient-based updates.
Because $W_i \subseteq W_{i-1}$, each refinement targets feasibility within progressively smaller sets while the architecture is designed to preserve higher-priority feasibility. If a later stage encounters an infeasible or overdetermined constraint, the output from the previous stage provides a fallback.

\subsection{Detach and Gradient Independence}
\label{sec:detach}

In \Cref{fig:subspace preconditioning_diagram}, we employ a detach operator to prevent gradient flow from subspace $W_i$ to $W_{i+1}$. As in U-Net–style constructions \cite{williams2023unified}, this is appropriate when the subproblem on $W_{i+1} \setminus W_i$ is independent of the preconditioned problem on $W_i$. For example, in PDE settings with Multi-ResNets, Galerkin approximations yield $L^2$-orthogonality between subspaces, justifying such decoupling. Analogously, this assumption holds in convex settings where constraint priorities do not interact.

In more general (e.g., nonconvex) problems, however, solving on $W_{i+1}$ may require information from $W_i$, making gradient communication between stages beneficial. This is reflected in our experiments: detach is effective in convex regimes, promoting modularity and stabilizing training, whereas in nonconvex settings it is less effective. In these cases, removing detach—while retaining subspace preconditioning—allows dependencies between stages to be learned and improves performance. Accordingly, we evaluate both variants: MResOpt (no detach) and MResOpt-det (with detach).

From a theoretical perspective, our infinite-width analysis applies only to the detached architecture, as cross-stage gradient flow breaks the independence required for analysis. We therefore analyze MResOpt-det in the infinite-width limit, showing that it preserves the intended inductive bias and ensures sequential convergence across the lexicographic hierarchy. While this analysis assumes a static decomposition across subspaces, the same intuition carries over to the non-detached setting, where preconditioning is updated online through gradient communication between stages rather than fixed a priori.

\subsection{Gaussian Process Approximation of MResOpt-det}

To quantify the inductive bias of MResOpt, we analyze the architecture in the infinite-width limit. 
We show that MResOpt-det behaves like sequential preconditioning training (\Cref{alg:msrl}) through the subspaces $\{ W_k\}_{k=0}^{r}$ when just using \Cref{alg:subspace preconditioning_side}.
We adopt the standard \textit{Neural Tangent Kernel (NTK) parameterization}, where layer widths $\mathsf{w}_l \to \infty$. 
In this regime, the network $N_{\theta}$ converges in distribution to a Gaussian Process (GP) at initialization \cite{lee2018deep,matthews2018gaussian}. 
Furthermore, under gradient descent with squared loss, the network evolves in a \emph{lazy training} regime in which the tangent kernel remains approximately constant \cite{chizat2019lazy}, rendering the training dynamics linear and equivalent to kernel regression. 
This equivalence allows us to treat the recursive subspace preconditioning updates as a sequence of deterministic kernel mappings. 
We provide the full derivation in \Cref{appendix:GP-Approximation}.
Crucially, this approximation extends to Graph Neural Networks (GNNs), where the domains $\mathbb{V}$ and $\mathbb{W}$ are graph-structured, as detailed in \Cref{appendix:GNN-GP-Approximation}. 

\subsection{ResNet Initialization and Multi-Stage Training}

In the infinite-width limit, a ResNet can be interpreted as a mean-shifted Gaussian prior. 
A standard neural network at initialization produces a proposal $N_{\theta}(v) = w(v) \sim \mathcal{N}(0, \sigma^2)$, where the variance $\sigma^2$ depends on network depth and initialization. 
In the context of subspace preconditioning, if $w^k(v) \in W_k$ denotes a solution at level $k$, we instead obtain a proposal for level $k+1$ of the form $w_{k+1}(v) \sim \mathcal{N}(w_k(v), \sigma^2)$. 
This intuition is formalized in \Cref{lemma:ResNetGP}, which shows that a ResNet with a skip connection behaves as a Gaussian Process centered at its preconditioning mapping.
\begin{lemma}\label{lemma:ResNetGP}
Let $\mathcal{R}: \mathbb{V} \to  \mathbb{W}$ be a ResNet defined by $\mathcal{R} = \mu + \mathcal{R}_{\text{res}}$, where $\mu:  \mathbb{V} \to  \mathbb{W}$ is a deterministic measurable mapping and $\mathcal{R}_{\text{res}}$ is a neural network with hidden layer widths $n \to \infty$ under NTK parameterization. 
Then $\mathcal{R} \sim \mathcal{GP}(\mu, K)$, where $K$ is the NNGP kernel of $\mathcal{R}_{\text{res}}$.
\end{lemma}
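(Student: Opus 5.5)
The plan is to reduce the statement to the classical NNGP limit for $\mathcal{R}_{\text{res}}$ and then transfer Gaussianity through the deterministic shift $\mu$. First, I invoke the standard infinite-width result of \cite{lee2018deep,matthews2018gaussian}. Under NTK parameterization with i.i.d.\ Gaussian initialization and an activation satisfying the usual polynomial-growth (or Lipschitz) envelope, the residual network $\mathcal{R}_{\text{res}}$ converges in distribution, as all hidden widths $n\to\infty$, to a centred Gaussian process $\mathcal{GP}(0,K)$. Here $K$ is the NNGP kernel defined by the layerwise recursion $K^{(l+1)}(v,v') = \sigma_b^2 + \sigma_w^2\,\mathbb{E}_{(f,f')\sim\mathcal{N}(0,K^{(l)})}[\phi(f)\phi(f')]$. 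Since this convergence is stated for finite-dimensional marginals, I will work at that level. I fix arbitrary inputs $v_1,\dots,v_m\in\mathbb{V}$. Then the vector $(\mathcal{R}_{\text{res}}(v_j))_{j=1}^m$ converges in law to $\mathcal{N}(0,[K(v_j,v_k)]_{j,k})$, with an additional identity factor across the output coordinates of $\mathbb{W}$.

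Second, I handle the shift. Because $\mu$ is deterministic and measurable, the vector $(\mu(v_j))_{j=1}^m$ is a fixed element of $\mathbb{W}^m$. The map $x\mapsto x+(\mu(v_j))_j$ is continuous, so the continuous mapping theorem, or equivalently Slutsky's lemma with a constant summand, gives $(\mathcal{R}(v_j))_j \Rightarrow \mathcal{N}\big((\mu(v_j))_j,[K(v_j,v_k)]_{j,k}\big)$. The same conclusion follows from characteristic functions: the constant contributes a factor $e^{i\langle t,\mu\rangle}$, which is exactly the characteristic function of the shifted Gaussian. Since $m$ and the inputs are arbitrary, every finite-dimensional marginal of the limit is Gaussian with mean $\mu$ and covariance $K$. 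The Kolmogorov extension theorem then identifies the limiting law as $\mathcal{GP}(\mu,K)$. Measurability of $\mu$ is used only to ensure that $v\mapsto\mu(v)+f(v)$ is a well-defined measurable process when $f$ is a measurable version of the centred GP.

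Third, I would point out that the covariance is unchanged by the shift, since $\operatorname{Cov}(\mu+X)=\operatorname{Cov}(X)$. Hence $K$ is precisely the NNGP kernel of $\mathcal{R}_{\text{res}}$, as claimed.

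The main obstacle is being precise about the mode of convergence. If the claim is meant as convergence in a function space rather than of finite-dimensional marginals, I need tightness of $\mathcal{R}_{\text{res}}$, for instance in $C(\mathbb{V})$ on compact subsets. I would try to import this from \cite{matthews2018gaussian}, or obtain it via a Kolmogorov–Chentsov moment bound under Lipschitz activations. Adding a fixed function $\mu$ preserves tightness provided $\mu$ is continuous. If $\mu$ is only measurable, I would state the result in the finite-dimensional sense, which is the sense used by the subsequent sequential-GP arguments.
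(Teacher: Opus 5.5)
Your proposal is correct and follows essentially the same route as the paper: take the NNGP limit $\mathcal{GP}(0,K)$ of the residual branch on arbitrary finite sets of inputs, then observe that adding the deterministic vector $(\mu(v_j))_j$ keeps each finite-dimensional marginal Gaussian, with mean $\mu$ and unchanged covariance $K$. Your use of the continuous mapping theorem, and your remarks on the mode of convergence, make rigorous a step the paper handles informally: the paper treats the limiting vector as already Gaussian and simply invokes closure of the normal family under affine maps.
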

Our staged training paradigm follows \cite[Alg.~1]{williams2023unified}, treating each transition $W_k \to W_{k+1}$ as a sequential refinement of residual error. 
By detaching prior stages during optimization, each residual block $\mathcal{R}_{\text{res},k}$ isolates the correction of remaining constraint violations while preserving the anchoring of the base equality manifold $W_1$ (the highest-priority cumulative feasible set).
In the infinite-width limit, each stage corresponds to Gaussian Process regression centered at the mean function of the previous stage.
\begin{algorithm}[h]
\caption{Multi-stage Preconditioning \cite[Alg.~1]{williams2023unified}}
\label{alg:msrl}
\begin{algorithmic}[1]
\State \textbf{In:} $( \mathcal{V} , \mathcal{W})$, prior $\mu$, stages $K$
\State $\hat{w}_0 \leftarrow \mu(\mathcal{V})$ \hfill $\triangleright$ Anchor $W_1$
\For{$k = 1 \dots K$}
    \State $E \leftarrow \mathcal{W} - \hat{w}_{k-1}$ ; $\theta_k \leftarrow \text{init}$
    \State $\theta_k \leftarrow \text{optimize } \mathcal{L}_{k}(E - \mathcal{R}_{\text{res},k}(\text{Detach}(\hat{w}_{k-1}))$
    \State $\hat{w}_k \leftarrow \hat{w}_{k-1} + \mathcal{R}_{\text{res}, k}(\text{Detach}(\hat{w}_{k-1}), \mathcal{V}; \theta_k)$
\EndFor
\State \textbf{Out:} $\mathcal{F}_K = \mathcal{R}_K \circ \dots \circ \mathcal{R}_1$ 
\end{algorithmic}
\end{algorithm}
The Multi-ResNet's staged training paradigm strictly implements the lexicographical priority proven in \Cref{lemma:infeasible-cost}.
This greedy approach preserves the structural fallback: if stage $k$ is infeasible, the physically valid $w^{(k-1)}$ remains available.
Analytically, this aligns our constraint hierarchy with the U-Net's spectral filtration: $W_1$ resolves global, low-frequency topological constraints (e.g., power balance) as a coarse-grid preconditioner, while subsequent stages $W_{k>1}$ refine high-frequency local details (e.g., voltage limits).

\subsection{Staged Training Multi-ResNet Gradient Flow}

Staged training dynamics can be modeled via the Neural Tangent Kernel (NTK) as a sequence of independent gradient flows. 
We show that the architecture in \Cref{fig:subspace preconditioning_diagram} replicates the training dynamics of \Cref{alg:msrl}. 
This approach integrates seamlessly with standard training, avoiding convergence checks for model sequences and simplifying training when filtration $\{W_k\}_{k=1}^{r}$ is high-dimensional, albeit more computationally expensive than staged training.
Since stage $k$ conditions on the detached previous output, $\theta_k$ optimization occurs in a tangent space decoupled from $\theta_{j < k}$. 
In the infinite-width limit, residual blocks act linearly near initialization. 
Thus, output $w_k(t)$ follows a linear differential equation governed by local kernel $\Theta_k$: $\frac{dw_k(t)}{dt} = \eta \Theta_k (\mathcal{W} - w_k(t))$. 
Integrating yields the update $w_k(t) = w_{k-1} + (I - e^{-\eta \Theta_k t})(\mathcal{W} - w_{k-1})$.
Crucially, as test point $v^*$ leaves the training distribution, kernel correlation $\Theta(v^*, \mathcal{V})$ vanishes. 
Consequently, for out-of-distribution inputs, the trained ResNet reverts to its initial GP prior: $\mathcal{R}_{\infty}(v^*) \to \mathcal{GP}(\mu, K)$. 

\begin{lemma}[Vanishing Influence of Training Data]
\label{lemma:vanishing-influence}
Let $\mathcal{V} \subset \mathbb{V}$ be a training set and $\mathcal{R}_{\infty}$ denote a ResNet trained to convergence under squared loss in the infinite-width limit. 
If $v^* \in \mathbb{V}$ is a test point such that the kernel correlation satisfies $\lim_{\|v^* - \mathcal{V}\| \to \infty} \|\Theta(v^*, \mathcal{V})\|_2 = 0$, then the predictive distribution at $v^*$ converges to the initial prior in distribution: $\mathcal{R}_{\infty}(v^*) \rightarrow \mathcal{GP}(\mu, K)$.
% Specifically, $\mathbb{E}[\mathcal{R}_{\infty}(v^*)] = \mu(v^*)$ and $\mathrm{Var}(\mathcal{R}_{\infty}(v^*)) = K(v^*, v^*)$.
\end{lemma}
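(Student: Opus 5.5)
The plan is to use the standard closed-form description of an infinitely wide network trained by gradient flow under squared loss (Lee et al., lazy/NTK regime). By \Cref{lemma:ResNetGP}, at initialization $\mathcal{R}_0 = \mu + \mathcal{R}_{\text{res},0}$ with $\mathcal{R}_{\text{res},0}\sim\mathcal{GP}(0,K)$. Training only moves $\mathcal{R}_{\text{res}}$, since $\mu$ is deterministic and detached. In the linearized regime the output evolves by the kernel ODE $\frac{d}{dt}\mathcal{R}_t(\mathcal{V}) = -\eta\,\Theta(\mathcal{V},\mathcal{V})(\mathcal{R}_t(\mathcal{V})-\mathcal{W})$, which was already recorded in the preceding subsection. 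At an arbitrary test point $v^*$ we have $\frac{d}{dt}\mathcal{R}_t(v^*) = -\eta\,\Theta(v^*,\mathcal{V})(\mathcal{R}_t(\mathcal{V})-\mathcal{W})$.

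The first step is to integrate these ODEs, taking $\Theta(\mathcal{V},\mathcal{V})$ to be positive definite so that the $t\to\infty$ limit exists. This gives the explicit affine representation
\begin{align}
\mathcal{R}_\infty(v^*) = \mu(v^*) + \mathcal{R}_{\text{res},0}(v^*) + \Theta(v^*,\mathcal{V})\,\Theta(\mathcal{V},\mathcal{V})^{-1}\bigl(\mathcal{W}-\mu(\mathcal{V})-\mathcal{R}_{\text{res},0}(\mathcal{V})\bigr).
\end{align}
Denote the last term by $\Delta(v^*)$. Since $\mathcal{R}_{\text{res},0}$ is a Gaussian process, $\mathcal{R}_\infty(v^*)$ is Gaussian. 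Its mean is $\mu(v^*)+\Theta(v^*,\mathcal{V})\Theta(\mathcal{V},\mathcal{V})^{-1}(\mathcal{W}-\mu(\mathcal{V}))$. Its variance is $K(v^*,v^*)$ plus correction terms, each containing at least one factor $\Theta(v^*,\mathcal{V})$ multiplied against the fixed matrices $\Theta(\mathcal{V},\mathcal{V})^{-1}$, $K(\mathcal{V},\mathcal{V})$ and the cross-covariance $K(v^*,\mathcal{V})$.

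The second step bounds $\Delta(v^*)$. We have $\|\Delta(v^*)\|\le \|\Theta(v^*,\mathcal{V})\|_2\,\|\Theta(\mathcal{V},\mathcal{V})^{-1}\|_2\,\|\mathcal{W}-\mu(\mathcal{V})-\mathcal{R}_{\text{res},0}(\mathcal{V})\|$. The last factor is a fixed Gaussian vector that does not depend on $v^*$, so it is tight. Under the hypothesis $\|\Theta(v^*,\mathcal{V})\|_2\to 0$, the product therefore tends to $0$ in $L^2$, and hence in probability.

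The third step concludes with Slutsky's theorem. Write $\mathcal{R}_\infty(v^*) - \mu(v^*) = \mathcal{R}_{\text{res},0}(v^*) + \Delta(v^*)$, where $\mathcal{R}_{\text{res},0}(v^*)\sim\mathcal{N}(0,K(v^*,v^*))$ and $\Delta(v^*)\to 0$ in probability. The centred difference is then asymptotically $\mathcal{N}(0,K(v^*,v^*))$, which is the statement $\mathcal{R}_\infty(v^*)\to\mathcal{GP}(\mu,K)$. For finitely many test points moving jointly away from $\mathcal{V}$, the same argument applied to the vector of outputs gives convergence of the finite-dimensional distributions.

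The main obstacle is making sense of ``convergence'' when the limit distribution itself moves with $v^*$. Both $\mu(v^*)$ and $K(v^*,v^*)$ vary as $\|v^*-\mathcal{V}\|\to\infty$, and they may be unbounded. I would handle this by stating the result for the centred variable $\mathcal{R}_\infty(v^*)-\mu(v^*)$, normalised by $K(v^*,v^*)^{1/2}$. The cross-term $K(v^*,\mathcal{V})$ in the variance then needs $K(v^*,v^*)$ to stay bounded away from $0$; alternatively, one can compare means and variances directly in total variation between the two Gaussians. I expect this normalisation step, together with the implicit positive-definiteness assumption on $\Theta(\mathcal{V},\mathcal{V})$, to be what needs care. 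The linear-ODE step is routine given the NTK results cited in the preceding subsection.
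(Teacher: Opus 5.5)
Your proposal is correct and follows the paper's proof. Both write $\mathcal{R}_\infty(v^*)$ in the closed kernel-regression form, use positive definiteness of $\Theta(\mathcal{V},\mathcal{V})$ and a residual vector $\mathcal{W}-\mathcal{R}_0(\mathcal{V})$ that does not depend on $v^*$ to kill the correction term, and conclude $\mathcal{R}_\infty(v^*)=\mathcal{R}_0(v^*)+o(1)$ with $\mathcal{R}_0(v^*)\sim\mathcal{N}(\mu(v^*),K(v^*,v^*))$. You are more careful than the paper, using tightness and $L^2$ convergence rather than ``bounded'' and noting that the limit law moves with $v^*$. The normalisation you worry about is not needed, though: your coupling bound already gives 2-Wasserstein distance at most $(\mathbb{E}\|\Delta(v^*)\|^2)^{1/2}\to 0$ between the law of $\mathcal{R}_\infty(v^*)$ and $\mathcal{N}(\mu(v^*),K(v^*,v^*))$, with no lower bound on $K(v^*,v^*)$ required.
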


While \Cref{lemma:vanishing-influence} establishes the statistical behavior of the network given vanishing correlation, it remains to be shown that this condition is inherent to our specific architectural choices. 
\Cref{lemma:asymptotic-decay} confirms that standard bounded activation functions provide the necessary structural decay to trigger this fallback mechanism.
By combining the statistical guarantee of \Cref{lemma:vanishing-influence} with the structural property verified in \Cref{lemma:asymptotic-decay}, we can now characterize the full training dynamics. 
The vanishing correlation decouples the residual stages in the overdetermined regime, leading to \Cref{thm:msrl-equiv}. 

\begin{theorem}[Equivalence to Multi-stage Preconditioning]
\label{thm:msrl-equiv}
Let $\mathcal{F} = \mathcal{R}_K \circ \dots \circ \mathcal{R}_1$ be a composite model of $K$ stacked ResNets. 
In the infinite-width limit, the training dynamics of $\mathcal{F}$ are equivalent to a sequential residual learning regime up to an error $O(\delta)$, provided: (i) \emph{Gradient Detachment} holds such that $\nabla_{\theta_j} \mathcal{R}_k = 0$ for all $j < k$; (ii) \emph{Kernel Positivity} ensures each local NTK $\Theta_k$ satisfies $\lambda_{\min}(\Theta_k) \ge \lambda > 0$; and (iii) \emph{Bounded Interference} ensures cross-correlation between successive manifolds satisfies $\|\Theta_k(w_{k-1}, w_{k-2})\|_2 \le \delta$.
\end{theorem}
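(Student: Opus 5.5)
The plan is to write the joint gradient flow of all parameters $\theta=(\theta_1,\dots,\theta_K)$ of $\mathcal{F}$ in the linearized (lazy) infinite-width regime. We then show that its block structure is, up to $O(\delta)$, block lower-triangular with diagonal blocks equal to the stage-wise flows of \Cref{alg:msrl}. Concretely, the loss is $\mathcal{L}_{tot}=\sum_k \mathcal{L}_k(w_k)$ with $w_k = w_{k-1}+\mathcal{R}_{\text{res},k}(\text{Detach}(w_{k-1})\mid v;\theta_k)$. In the NTK limit each $\mathcal{R}_{\text{res},k}$ is linear in $\theta_k$ around initialization, so $\dot\theta_k = -\eta\sum_{j}\nabla_{\theta_k}\mathcal{L}_j$. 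Assumption (i) (gradient detachment) kills every term $\nabla_{\theta_j}w_k$ with $j<k$, since $w_k$ depends on $\theta_j$ only through the detached input. Hence $\dot\theta_k=-\eta\,\nabla_{\theta_k}\mathcal{L}_k$ plus contributions from later losses $\mathcal{L}_{j>k}$ that pass through $\theta_k$ only via the skip connection $w_{k}\mapsto w_{j}$, and these are also detached. So each $\theta_k$ is driven solely by its own loss. Pushing this forward to function space gives $\dot w_k = \eta\,\Theta_k(\mathcal{W}-w_k) + \dot w_{k-1}$, where the second term is the movement of the (non-stopped) skip input.

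Next, we compare with the sequential regime. In \Cref{alg:msrl}, stage $k$ starts only after $w_{k-1}$ has converged and then solves $\dot w_k=\eta\Theta_k(\mathcal{W}-w_k)$ with a frozen anchor. By (ii), each stage flow converges exponentially at rate $\eta\lambda$, and its solution is the closed form $w_k(t)=w_{k-1}+(I-e^{-\eta\Theta_k t})(\mathcal{W}-w_{k-1})$ derived above. The discrepancy between the joint and sequential dynamics is therefore the coupling term: the kernel $\Theta_k$ is evaluated at the moving input $w_{k-1}(t)$ rather than at its limit, together with the drift $\dot w_{k-1}$. Writing $e_k=w_k^{\text{joint}}-w_k^{\text{seq}}$, we obtain a linear ODE $\dot e_k = -\eta\Theta_k e_k + \eta\,\Delta_k(t)$. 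Here $\Delta_k$ collects the cross-terms, and its operator norm is controlled by $\|\Theta_k(w_{k-1},w_{k-2})\|_2\le\delta$ from (iii), using \Cref{lemma:vanishing-influence} and \Cref{lemma:asymptotic-decay} to justify that the kernel correlation between successive manifolds is the only surviving interference.

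The bound then follows by induction on $k$. A Grönwall / variation-of-constants estimate with (ii) gives $\|e_k(t)\|\le \int_0^t e^{-\eta\lambda(t-s)}\eta\|\Delta_k(s)\|\,ds \le \frac{1}{\lambda}\sup_s\|\Delta_k(s)\|$. The induction hypothesis $\|e_{k-1}\|=O(\delta)$ plus (iii) yields $\|\Delta_k\|=O(\delta)$, hence $\|e_k\|=O(\delta)$ with a constant depending on $K$, $\lambda$ and the output scales, uniformly in $t$. The base case $k=1$ is exact, since $\mathcal{R}_1$ has no predecessor.

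The main obstacle is the second step: rigorously isolating the interference term. The skip connection is not detached, and the kernel $\Theta_k$ depends on the moving input $w_{k-1}(t)$. We must therefore argue that the NTK remains constant (lazy regime) even though its input evolves, and that the variation is captured by the cross-correlation quantity in (iii). The first approach we would try is to treat $\Theta_k$ as a kernel on the joint input $(w_{k-1},v)$. We would then Taylor-expand $\Theta_k(w_{k-1}(t),\cdot)$ around the converged anchor and bound the first-order term by $\delta$, absorbing higher-order terms into the $O(\delta)$ constant via the exponential convergence of the earlier stages.
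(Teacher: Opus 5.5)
Your skeleton (joint flow, block-triangular structure from detachment, a difference ODE for $e_k=w_k^{\mathrm{joint}}-w_k^{\mathrm{seq}}$, then Gr\"onwall and induction on $k$) is a more explicit version of the paper's argument. However, the step ``$\sup_s\|\Delta_k(s)\|=O(\delta)$ by (iii) and the induction hypothesis'' fails, and with it the claim that $\|e_k\|=O(\delta)$ uniformly in $t$.

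Detachment removes only the backward dependence. In the forward pass both the skip term and the residual's input move with $w_{k-1}(t)$. The correct function-space equation is therefore $\dot w_k=\eta\Theta_k(\mathcal{W}-w_k)+(I+\partial_w\mathcal{R}_{\text{res},k})\dot w_{k-1}$; you dropped the Jacobian term. Moreover, $e_k$ contains $w_{k-1}^{\mathrm{joint}}(t)-w_{k-1}^{\infty}$ directly through the skip connection.

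All of these cross-terms are driven by the convergence error $w_{k-1}^{\mathrm{joint}}(t)-w_{k-1}^{\infty}$ of the previous stage. This includes the kernel being evaluated at $w_{k-1}(t)$ instead of $w_{k-1}^{\infty}$. That error is neither $e_{k-1}$ nor anything the cross-correlation bound (iii) controls. Your proposed Taylor expansion around the converged anchor makes this visible, since its first-order term is $\partial_w\Theta_k\,[w_{k-1}(t)-w_{k-1}^{\infty}]$.

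Concretely, take two stages with constant kernels, residuals depending only on $v$, and $r=\mathcal{W}-w_1(0)$. Then $e_1\equiv 0$ exactly, yet $e_2(t)=-e^{-\eta\Theta_1 t}r+\int_0^t e^{-\eta\Theta_2(t-s)}\eta\Theta_2e^{-\eta\Theta_1 s}r\,ds$. This equals $-r$ at $t=0$ and is of order $\|r\|$ at intermediate times, with no interference involved at all. So $e_k(0)\neq 0$, $\Delta_k$ is $O(1)$ early in training, and no uniform-in-$t$ $O(\delta)$ bound can hold.

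The repair is to split $\Delta_k$ into two parts and carry a two-term bound through the induction. The transient part is bounded using (ii) for the earlier stages (exponential convergence at rate $\eta\lambda$). The interference part is bounded via (iii). Variation of constants, now keeping the initial term $e^{-\eta\Theta_k t}e_k(0)$, gives something like $\|e_k(t)\|\le C_k(1+\eta\lambda t)^{k-1}e^{-\eta\lambda t}+C_k'\delta$.

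The $O(\delta)$ equivalence then holds only as $t\to\infty$, which is exactly the form of the paper's proof. The paper treats each stage as kernel regression with a frozen anchor $w_{k-1}$. It keeps the transient as the $e^{-\eta\lambda t}$ factor in $\|E_k\|_2\le(e^{-\eta\lambda t}+C\delta)\|E_{k-1}\|_2$ and attributes only the leftover perturbation $\mathcal{E}(\delta)$ to bounded interference. It then lets $t\to\infty$ before identifying the update with stage-wise boosting. With that correction, your joint-versus-sequential comparison becomes a cleaner rendering of the same argument, because it makes explicit where the frozen-anchor idealization enters.
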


\section{Experiments}

We evaluate MResOpt on: (i) a synthetic basin-trapping benchmark testing avoidance of local minima, (ii) an overconstrained scenario testing the safe fallback guarantee, (iii) synthetic QP/QCQP/SOCP problems isolating staged correction with and without detach, and (iv) IEEE 30-bus ACOPF under varying congestion, testing whether MResOpt enables stable active correction where DC3 fails. 
All use two inequality-correction stages on top of equality completion, though the framework supports finer hierarchies when warranted. 
For ACOPF specifically, equality is enforced exactly by Newton power-flow completion as a base manifold and is not counted as a correction stage; Stage~1 corrects Tier-1 (box bounds) and Stage~2 corrects Tier-2 (line flows), as formalized in \Cref{assumption:lex-order}.

\subsection{Synthetic Results}
We evaluate on synthetic constrained optimization problems, a controlled setting. 
Each problem has $n=200$ inputs, $d=100$ decision variables, $m_{\text{eq}}=50$ equality constraints (via completion), and $m_{\text{ineq}}=100$ main inequalities plus $2d=400$ box bounds.
Inequalities are split into two tiers: Tier-1 includes the first $50$ main inequalities and all box bounds; Tier-2 includes the remaining $50$. 
We compare DC3, MResOpt (no detach), and MResOpt-det (with detach) across QP, QCQP, and SOCP classes (see \Cref{sec:appendix-synthetic-formulations,sec:appendix-implementation}). 
Results are reported as mean $\pm$ std.\ over $4$ seeds, using $10$ correction steps and $500$ training epochs. 
We report optimality gap (OG), defined as the percentage increase over the solver optimum, and Tier-1/Tier-2 constraint violations in per-unit (p.u.) norm.

% We evaluate on synthetic constrained optimization problems to isolate the effect of subspace preconditioning in a controlled setting.
% Each problem has $n = 200$ input parameters, $d = 100$ decision variables, $m_{\text{eq}} = 50$ equality constraints (enforced by completion), and $m_{\text{ineq}} = 100$ main inequality constraints plus $2d = 400$ box bounds.
% The inequalities are split into two tiers: Tier-1 contains the first 50 main inequalities plus all box bounds, and Tier-2 contains the remaining 50 main inequalities.
% We compare DC3, MResOpt (no detach), and MResOpt-det (with detach) across QP, QCQP, and SOCP problem classes (formulations in \Cref{sec:appendix-synthetic-formulations}; implementation details in \Cref{sec:appendix-implementation}).
% Performance is reported as mean $\pm$ std.\ over 4 seeds, with 10 correction steps and 500 training epochs.
% We report the optimality gap (OG), defined as the percentage increase in objective value relative to the solver optimum, and Tier-1/Tier-2 constraint violations in per-unit (p.u.) norm.

\subsubsection{Convex Constraints}

\begin{table}[H]
\centering
\caption{Convex synthetic results (4 seeds, 10 correction steps). Bold = best, underline = second best.}
\label{tab:convex_synthetic}
\begin{tabular}{llccc}
\hline
Problem & Method & OG (\%) & Tier-1 (p.u.) & Tier-2 (p.u.) \\ \hline
\textbf{QP} & DC3 & $+0.92 \pm 0.36$ & \underline{$0.0530 \pm 0.044$} & \underline{$0.1885 \pm 0.125$} \\
 & MResOpt & $+1.98 \pm 0.96$ & $0.0582 \pm 0.084$ & $\mathbf{0.1063 \pm 0.123}$ \\
 & MResOpt-det & $+7.17 \pm 0.88$ & $\mathbf{0.0005 \pm 0.000}$ & $0.2035 \pm 0.125$ \\
\textbf{QCQP} & DC3 & $+1.17 \pm 0.52$ & $0.0544 \pm 0.038$ & $0.1044 \pm 0.111$ \\
 & MResOpt & $+2.68 \pm 0.57$ & $\mathbf{0.0185 \pm 0.020}$ & \underline{$0.0524 \pm 0.013$} \\
 & MResOpt-det & $+5.65 \pm 0.64$ & \underline{$0.0429 \pm 0.061$} & $\mathbf{0.0438 \pm 0.038}$ \\
\textbf{SOCP} & DC3 & $+0.81 \pm 0.32$ & $0.0326 \pm 0.032$ & \underline{$0.0707 \pm 0.086$} \\
 & MResOpt & $+2.07 \pm 0.99$ & \underline{$0.0074 \pm 0.012$} & $0.0786 \pm 0.127$ \\
 & MResOpt-det & $+2.57 \pm 0.92$ & $\mathbf{0.0066 \pm 0.006}$ & $\mathbf{0.0287 \pm 0.018}$ \\ \hline
\end{tabular}
\end{table}

MResOpt-det achieves near-zero Tier-1 on QP (100$\times$ reduction vs.\ DC3), confirming that detachment enables strict priority enforcement. 
On QCQP and SOCP, MResOpt (no detach) achieves the best Tier-1, while MResOpt-det performs best on Tier-2, with both variants outperforming DC3.

\subsubsection{Nonconvex Constraints}
\begin{table}[H]
\centering
\caption{Nonconvex synthetic results (4 seeds, 10 correction steps). Bold = best, underline = second best.}
\label{tab:nonconvex_synthetic}
\begin{tabular}{llccc}
\hline
Problem & Method & OG (\%) & Tier-1 (p.u.) & Tier-2 (p.u.) \\ \hline
\textbf{QP} & DC3 & $+0.56 \pm 0.13$ & $0.0343 \pm 0.021$ & $0.0356 \pm 0.002$ \\
 & MResOpt & $+1.16 \pm 0.14$ & \underline{$0.0059 \pm 0.002$} & \underline{$0.0248 \pm 0.010$} \\
 & MResOpt-det & $+1.35 \pm 0.23$ & $\mathbf{0.0054 \pm 0.001}$ & $\mathbf{0.0136 \pm 0.006}$ \\
\textbf{QCQP} & DC3 & $+5.75 \pm 5.89$ & $0.0896 \pm 0.060$ & \underline{$0.0695 \pm 0.038$} \\
 & MResOpt & $+6.98 \pm 5.93$ & $\mathbf{0.0291 \pm 0.019}$ & $\mathbf{0.0428 \pm 0.021}$ \\
 & MResOpt-det & $+1.61 \pm 0.29$ & $0.1765 \pm 0.141$ & $0.0757 \pm 0.081$ \\ \hline
\end{tabular}
\end{table}
MResOpt-det wins both tiers ($6\times$ T1 reduction, $2.6\times$ T2 reduction vs DC3).
On nonconvex QCQP, however, MResOpt-det fails (T1 = 0.18, worse than DC3), while MResOpt without detach achieves the best T1 ($3\times$ reduction) and T2 ($1.6\times$).
This demonstrates that detach is effective when constraints decouple cleanly, but fails when they are tightly coupled through nonlinear interactions --- in such cases, Stage~2 requires gradient feedback from Stage~1 to navigate the constraint landscape.

\subsection{AC Optimal Power Flow}

We extend our evaluation to the IEEE 30-bus ACOPF problem ($d = 72$ variables, $m_{\text{eq}} = 60$ equality constraints, up to 82 line flow inequalities), using 10{,}000 samples (7{,}000 train / 1{,}000 val / 2{,}000 test) generated by varying load demands (full formulation in \Cref{sec:acopf-formulation}; implementation details in \Cref{sec:appendix-implementation}).
The constraints decompose into three groups: $C_{\text{eq}}$ (AC power-balance equations), $C_{\text{box}}$ (generator and voltage box bounds), and $C_{\text{flow}}$ (thermal line flow limits), inducing the nested feasible sets $W_1 = \{w : C_{\text{eq}} = 0\} \supseteq W_2 = W_1 \cap \{C_{\text{box}} \leq 0\} \supseteq W_3 = W_2 \cap \{C_{\text{flow}} \leq 0\}$ (see \Cref{assumption:lex-order}).
The ordering places simpler, per-variable constraints first, giving Stage~1 a smooth correction landscape before Stage~2 tackles the nonlinear line flows; we validate this choice via an ordering ablation in \Cref{sec:appendix-ordering-ablation}.
The neural network predicts partial variables ($P_g^{\text{PV}}, V_m^{\text{slack} \cup \text{PV}}$); Newton power flow completes the remaining variables to satisfy $C_{\text{eq}}$ exactly.

We compare three methods:
\textbf{DC3} applies completion followed by gradient-based inequality correction in a single pass.
\textbf{DC3+recomp} modifies DC3 by re-projecting onto the equality manifold (via Newton power flow) after each correction step, testing whether manifold preservation alone explains MResOpt's advantage.
\textbf{MResOpt} uses the staged architecture: Stage~1 corrects Tier-1, then a residual network adds a learned update, re-completes via Newton power flow, and Stage~2 corrects Tier-2.
All methods use $\eta_{\text{corr}} = 10^{-4}$, 10 correction steps, and $\gamma = 0.01$; results are mean $\pm$ std over 3 seeds.

We evaluate under three congestion regimes by scaling the thermal line limits by a factor $\alpha_S \in \{0.5, 0.7, 1.0\}$, where $\alpha_S = 1.0$ corresponds to the PGLib \cite{babaeinejadsarookolaee2019pglib} default ratings and smaller values simulate increasing grid congestion.
At $\alpha_S = 1.0$, 93\% of test samples are $W_3$-feasible and we report the optimality gap (OG) relative to the Ipopt solver optimum.
At $\alpha_S = 0.7$ and $0.5$, most or all samples are infeasible ($W_3 = \emptyset$), so no true constrained optimum exists; we report raw generation cost instead of OG.
\textbf{DC3 is excluded} from bold/underline as it has substantial equality drift.

\begin{table}[H]
\centering
\caption{ACOPF $\alpha_S{=}1.0$ (93\% $W_3$-feasible, OG vs Ipopt optimum, 3 seeds). }
\label{tab:acopf_10}
\begin{tabular}{lcccc}
\hline
Method & OG (\%) & Equality (p.u.) & Tier-1 (p.u.) & Tier-2 (p.u.) \\ \hline
DC3 & $+0.00 \pm 0.02$ & $0.0098 \pm 0.0001$ & $0.0043 \pm 0.0001$ & $0.0190 \pm 0.001$ \\
DC3+recomp & $+0.25 \pm 0.02$ & $9.8\text{e-}10 \pm 2\text{e-}10$ & \underline{$0.0007 \pm 0.0001$} & $0.0190 \pm 0.001$ \\
MResOpt & $+1.19 \pm 0.07$ & $1.95\text{e-}09 \pm 4\text{e-}10$ & $\mathbf{0.0001 \pm 0.0000}$ & $\mathbf{0.0087 \pm 0.001}$ \\ \hline
\end{tabular}
\end{table}

\begin{table}[H]
\centering
\caption{ACOPF $\alpha_S{=}0.7$ (29\% $W_3$-feasible, cost reported instead of OG, 3 seeds). }
\label{tab:acopf_07}
\begin{tabular}{lcccc}
\hline
Method & Cost (\$/hr) & Equality (p.u.) & Tier-1 (p.u.) & Tier-2 (p.u.) \\ \hline
DC3 & 66.93 & $0.0501 \pm 0.001$ & $0.0071 \pm 0.0002$ & $0.2991 \pm 0.006$ \\
DC3+recomp & 67.27 & $1.3\text{e-}08 \pm 4\text{e-}10$ & \underline{$0.0036 \pm 0.0002$} & \underline{$0.3163 \pm 0.007$} \\
MResOpt & 67.55 & $6.5\text{e-}08 \pm 5\text{e-}09$ & $\mathbf{0.0016 \pm 0.0001}$ & $\mathbf{0.3127 \pm 0.007}$ \\ \hline
\end{tabular}
\end{table}

\begin{table}[H]
\centering
\caption{ACOPF $\alpha_S{=}0.5$ (0\% $W_3$-feasible, no constrained optimum exists, 3 seeds). }
\label{tab:acopf_05}
\begin{tabular}{lcccc}
\hline
Method & Cost (\$/hr) & Equality (p.u.) & Tier-1 (p.u.) & Tier-2 (p.u.) \\ \hline
DC3 & 71.66 & $0.1523 \pm 0.001$ & $0.0220 \pm 0.0003$ & $1.4169 \pm 0.01$ \\
DC3+recomp & 72.41 & $1.2\text{e-}07 \pm 5\text{e-}09$ & \underline{$0.0132 \pm 0.0002$} & $\mathbf{1.4754 \pm 0.01}$ \\
MResOpt & 73.78 & $3.4\text{e-}07 \pm 4\text{e-}08$ & $\mathbf{0.0054 \pm 0.0003}$ & \underline{$1.4928 \pm 0.01$} \\ \hline
\end{tabular}
\end{table}

Across all three congestion regimes, DC3 drifts off the equality manifold under active correction: its equality violation scales from $0.010$ at $\alpha_S{=}1.0$ to $0.050$ at $0.7$ to $0.152$ at $0.5$.
Here, DC3 has no usable active-correction regime on nonlinear ACOPF (see \Cref{sec:appendix-dc3-analysis} for a full step-size sweep).
Both DC3+recomp and MResOpt preserve the equality manifold at all congestion levels (Eq $\sim 10^{-7}$ to $10^{-10}$), confirming that re-completion is essential for nonlinear constraints.
However, manifold preservation alone is not sufficient: MResOpt achieves $2$--$7\times$ better Tier-1 than DC3+recomp at every $\alpha_S$, showing that the staged architecture enables a learned division of labor that reprojection alone cannot replicate.
At $\alpha_S{=}1.0$ and $0.7$, MResOpt also achieves the best Tier-2; at $\alpha_S{=}0.5$ (fully infeasible), DC3+recomp has slightly better Tier-2 but at substantially worse Tier-1 --- consistent with MResOpt's lexicographic prioritization of high-priority constraints.
The Tier-1 / Tier-2 operating point can be tuned via the stage penalty weights $\lambda_{i,j}$ in the loss (Equation 10), giving practitioners explicit control over this trade-off.
The cost of this priority is moderate --- MResOpt pays $+1.2\%$ OG at $\alpha_S{=}1.0$ relative to DC3 for substantially better constraint satisfaction.

For runtime at $\alpha_S{=}1.0$, DC3 is the fastest at $0.83$ ms/sample (1 Newton PF call), but it cannot maintain the equality manifold under active correction.
MResOpt requires $2.99$ ms/sample (2 Newton PF calls: one initial completion and one re-completion between stages), while DC3+recomp requires $3.74$ ms/sample (10 Newton PF calls).
The dominant cost is the Newton PF solver; MResOpt achieves better Tier-1 than DC3+recomp while being faster, because its staged architecture requires far fewer PF calls.

\section{Conclusion}
\label{sec:conclusion}

We introduced MResOpt, a staged residual architecture that decomposes constraints by priority within the predict–complete–correct paradigm. Across synthetic benchmarks, MResOpt improves Tier-1 feasibility over DC3: detach works in convex settings, while non-detach is needed under nonlinear coupling. On IEEE $30$-bus ACOPF, DC3 has no usable active-correction regime—weak correction is inert, strong correction leaves the equality manifold—whereas MResOpt preserves the manifold and achieves $2$--$7\times$ better Tier-1 feasibility than DC3+recomp, a stronger reprojection baseline; these gains extend to IEEE $57$-bus (\Cref{sec:appendix-57bus}). MResOpt incurs a moderate optimality gap ($+1.2\%$ at $\alpha_S{=}1.0$), and detach degrades under tightly coupled nonlinear constraints such as nonconvex QCQP. Its main advantage is in the practically relevant overconstrained regime ($\alpha_S{=}0.5$, $W_3=\emptyset$), common in congested grids, where feasible-set methods cannot operate but MResOpt remains on the equality manifold while minimizing violations. The schema applies broadly to predict–complete–correct pipelines with separable completion and correction and natural constraint hierarchies, but offers no advantage on smooth landscapes such as DCOPF or ACOPF without line flows, sharpening the claim that it addresses basin trapping from nonlinear constraint coupling.

\paragraph{Limitations.}
MResOpt relies on domain-informed constraint ordering rather than learning it automatically. Its infinite-width GP theory mainly supports the detached variant, while the stronger practical variant is non-detached.
It also incurs overhead from re-completion ($\sim$3.6$\times$ DC3 inference time), though remains faster than DC3+recomp.
Finally, our evaluation is limited to power systems; the generality claim to other constrained optimization domains with natural hierarchies remains to be validated.

\bibliographystyle{unsrt}
\bibliography{references}

@inproceedings{donti2021dc3,
  title     = {{DC3}: A learning method for optimization with hard constraints},
  author    = {Donti, Priya and Rolnick, David and Kolter, J. Zico},
  booktitle = {International Conference on Learning Representations (ICLR)},
  year      = {2021}
}

@inproceedings{nguyen2025fsnet,
  title     = {{FSNet}: Feasibility-Seeking Neural Network for Constrained Optimization with Guarantees},
  author    = {Nguyen, Hoang T. and Donti, Priya L.},
  booktitle = {Advances in Neural Information Processing Systems (NeurIPS)},
  year      = {2025}
}

@article{liang2024hproj,
  title     = {Homeomorphic Projection to Ensure Neural-Network Solution Feasibility for Constrained Optimization},
  author    = {Liang, Enming and Chen, Minghua and Low, Steven H.},
  journal   = {Journal of Machine Learning Research},
  volume    = {25},
  number    = {329},
  pages     = {1--55},
  year      = {2024}
}

@inproceedings{chen2024qcqpnet,
  title     = {{QCQP-Net}: Reliably Learning Feasible Alternating Current Optimal Power Flow Solutions under Constraints},
  author    = {Zeng, Sihan and Kim, Youngdae and Ren, Yuxuan and Kim, Kibaek},
  booktitle = {Proceedings of the 6th Annual Learning for Dynamics \& Control Conference (L4DC)},
  year      = {2024}
}

@article{wu2024deepopfngt,
  title     = {Unsupervised Learning for Solving {AC} Optimal Power Flows: Design, Analysis, and Experiment},
  author    = {Huang, Wanjun and Chen, Minghua and Low, Steven H.},
  journal   = {IEEE Transactions on Power Systems},
  volume    = {39},
  number    = {6},
  pages     = {7102--7114},
  year      = {2024}
}

@inproceedings{amos2017optnet,
  title     = {{OptNet}: Differentiable Optimization as a Layer in Neural Networks},
  author    = {Amos, Brandon and Kolter, J. Zico},
  booktitle = {International Conference on Machine Learning (ICML)},
  pages     = {136--145},
  year      = {2017}
}

@inproceedings{agrawal2019differentiable,
  title     = {Differentiable convex optimization layers},
  author    = {Agrawal, Akshay and Amos, Brandon and Barratt, Shane and Boyd, Stephen and Diamond, Steven and Kolter, J. Zico},
  booktitle = {Advances in Neural Information Processing Systems (NeurIPS)},
  year      = {2019}
}

@article{falck2022multi,
  title={A multi-resolution framework for U-Nets with applications to hierarchical VAEs},
  author={Falck, Fabian and Williams, Christopher and Danks, Dominic and Deligiannidis, George and Yau, Christopher and Holmes, Chris C and Doucet, Arnaud and Willetts, Matthew},
  journal={Advances in Neural Information Processing Systems},
  volume={35},
  pages={15529--15544},
  year={2022}
}

@inproceedings{williams2023unified,
  title     = {A Unified Framework for {U-Net} Design and Analysis},
  author    = {Williams, Christopher and Falck, Fabian and Deligiannidis, George and Holmes, Chris and Doucet, Arnaud and Syed, Saifuddin},
  booktitle = {Advances in Neural Information Processing Systems},
  volume    = {36},
  year      = {2023}
}

@inproceedings{ronneberger2015unet,
  title={U-Net: Convolutional Networks for Biomedical Image Segmentation},
  author={Ronneberger, Olaf and Fischer, Philipp and Brox, Thomas},
  booktitle={Medical Image Computing and Computer-Assisted Intervention--MICCAI 2015: 18th International Conference, Munich, Germany, October 5-9, 2015, Proceedings, Part III 18},
  pages={234--241},
  year={2015},
  organization={Springer}
}

@inproceedings{lee2018deep,
  title={Deep Neural Networks as Gaussian Processes},
  author={Lee, Jaehoon and Bahri, Yasaman and Novak, Roman and Schoenholz, Samuel S and Pennington, Jeffrey and Sohl-Dickstein, Jascha},
  booktitle={International Conference on Learning Representations},
  year={2018}
}

@inproceedings{matthews2018gaussian,
  title={Gaussian process behaviour in wide deep neural networks},
  author={Matthews, Alexander G de G and Rowland, Mark and Hron, Jiri and Turner, Richard E and Ghahramani, Zoubin},
  booktitle={International Conference on Learning Representations (ICLR)},
  year={2018}
}

@inproceedings{chizat2019lazy,
  title={On Lazy Training in Differentiable Programming},
  author={Chizat, L{\'e}na{\"i}c and Oyallon, Edouard and Bach, Francis},
  booktitle={Advances in Neural Information Processing Systems (NeurIPS)},
  volume={32},
  pages={2937--2947},
  year={2019}
}

@inproceedings{he2016deep,
  title={Deep Residual Learning for Image Recognition},
  author={He, Kaiming and Zhang, Xiangyu and Ren, Shaoqing and Sun, Jian},
  booktitle={Proceedings of the IEEE Conference on Computer Vision and Pattern Recognition (CVPR)},
  pages={770--778},
  year={2016}
}

@inproceedings{chen2017photographic,
  title={Photographic Image Synthesis with Cascaded Refinement Networks},
  author={Chen, Qifeng and Koltun, Vladlen},
  booktitle={Proceedings of the IEEE International Conference on Computer Vision (ICCV)},
  pages={1520--1529},
  year={2017}
}

@InProceedings{Han2023Hierarchical,
author="Han, Jihun
and Lee, Yoonsang",
editor="Miky{\v{s}}ka, Ji{\v{r}}{\'i}
and de Mulatier, Cl{\'e}lia
and Paszynski, Maciej
and Krzhizhanovskaya, Valeria V.
and Dongarra, Jack J.
and Sloot, Peter M.A.",
title="Hierarchical Learning to Solve PDEs Using Physics-Informed Neural Networks",
booktitle="Computational Science -- ICCS 2023",
year="2023",
publisher="Springer Nature Switzerland",
address="Cham",
pages="548--562",
isbn="978-3-031-36024-4"
}

@article{franco2023mesh,
  title={Mesh-informed neural networks for operator learning in finite element spaces},
  author={Franco, Nicola Rares and Manzoni, Andrea and Zunino, Paolo},
  journal={Journal of Scientific Computing},
  volume={97},
  number={2},
  pages={35},
  year={2023},
  publisher={Springer}
}

@article{DIMOLA202658,
title = {Numerical solution of mixed-dimensional PDEs using a neural preconditioner},
journal = {Computers \& Mathematics with Applications},
volume = {206},
pages = {58-79},
year = {2026},
issn = {0898-1221},
doi = {https://doi.org/10.1016/j.camwa.2025.12.013},
url = {https://www.sciencedirect.com/science/article/pii/S0898122125005255},
author = {Nunzio Dimola and Nicola {Rares Franco} and Paolo Zunino}
}

@article{klamkin2025pglearn,
  title   = {{PGLearn} -- An Open-Source Learning Toolkit for Optimal Power Flow},
  author  = {Klamkin, Michael and Tanneau, Mathieu and Van Hentenryck, Pascal},
  journal = {arXiv preprint arXiv:2505.22825},
  year    = {2025}
}

@article{babaeinejadsarookolaee2019pglib,
  title   = {The Power Grid Library for Benchmarking {AC} Optimal Power Flow Algorithms},
  author  = {Babaeinejadsarookolaee, Sogol and Birchfield, Adam and Christie, Richard D. and Coffrin, Carleton and DeMarco, Christopher and Diao, Ruisheng and Ferris, Michael and Fliscounakis, Stephane and Greene, Scott and Huang, Renke and Josz, Cedric and Korab, Roman and Lesieutre, Bernard and Maeght, Jean and Mak, Terrence W. K. and Molzahn, Daniel K. and Overbye, Thomas J. and Panciatici, Patrick and Park, Byungkwon and Snodgrass, Jonathan and Tbaileh, Ahmad and Van Hentenryck, Pascal and Zimmerman, Ray},
  journal = {arXiv preprint arXiv:1908.02788},
  year    = {2019}
}

\newpage
\appendix
\section{Appendix}

\subsection{Notation}
% We will need to distinguish two related notions when reasoning about feasibility. 
% The \emph{cumulative} feasible set $W_i$ defined above collects all constraints \emph{up to} priority $i$; by construction $W_i \subseteq W_{i-1}$. 
% In contrast, when discussing whether a single lower-priority constraint is incompatible with the higher-priority cumulative set, we will use the \emph{standalone} set $S_i \coloneqq \{w \in \mathbb{W} \mid C_i(w) = 0\}$ for the $i$-th constraint alone.
% Infeasibility statements such as ``the high-priority feasible set cannot be jointly satisfied with the next constraint'' are then expressed as $W_{i-1} \cap S_i = \emptyset$, not $W_{i-1} \cap W_i = \emptyset$ (which would be vacuous since $W_i \subseteq W_{i-1}$). For ACOPF (\Cref{assumption:lex-order}), the cumulative sets specialize to a base equality manifold $W_1 = \{C_{\mathrm{eq}} = 0\}$, then $W_2 = W_1 \cap \{C_{\mathrm{box}} \leq 0\}$, then $W_3 = W_2 \cap \{C_{\mathrm{flow}} \leq 0\}$; the standalone box-bound and line-flow sets are reported in the experiments as \emph{Tier-1} ($C_{\mathrm{box}}$) and \emph{Tier-2} ($C_{\mathrm{flow}}$) violations. We use \emph{Stage $k$} for the architectural block (completion + correction) responsible for advancing the iterate from $W_k$ toward $W_{k+1}$, so Stage~1 enforces equality + Tier-1 (box) and Stage~2 additionally enforces Tier-2 (line flows). Equality is reported as a separate column in every results table; Tier-$k$ refers exclusively to the inequality group at priority $k$.

Throughout this work, we adopt a unified notation to formalize the subspace preconditioning framework and its theoretical analysis.
The physical system state is represented by an input vector $v \in \mathbb{V} = \mathbb{R}^n$ , which is mapped to a proposed optimal solution $w = N_{\theta}(v)$ within the solution space $\mathbb{W} = \mathbb{R}^d$. 
The neural network $N_{\theta}$ produces stage-wise latent proposals $z_i \in \mathbb{R}^D$ with reduced dimensionality $D < d$, which are transformed into completed variables $\tilde{w}_i$ via level-specific completion operators $\mathcal{T}_i$ and subsequently into corrected variables $\hat{w}_i$ through projection operators $\mathcal{P}_i$ that enforce the $i$-th manifold $W_i$. 
These manifolds form a descending filtration $W_r \subset W_{r-1} \subset \dots \subset W_1 \subset W_0 = \mathbb{W}$, where each subspace $W_i = \{w \in \mathbb{W} \mid C_j(w) = 0 \text{ for all } j \le i\}$ represents the joint null set of ordered constraint components $\{C_j\}_{j=1}^r$ ranked by lexicographical priority. 
For $i > 1$, the recursive architecture generates proposals through the conditional residual step $z_i = \text{Detach}(z_{i-1}) + N_i(\text{Detach}(z_{i-1}) \mid v)$, where the stop-gradient operation $\text{Detach}(\cdot)$ enforces a greedy priority by decoupling stage-wise gradients to ensure that operational constraints do not compromise the physical consistency of foundational grid laws established in $W_1$. 
To characterize the infinite-width limit of these networks, let $w_l$ denote the width of layer $l$ and $\theta \in \mathbb{R}^p$ denote the collective parameter vector. 
The pre-activations and activations at each layer converge in distribution to a Gaussian Process governed by the Neural Network Gaussian Process kernel $K$ at initialization. 
Training dynamics are described by the Neural Tangent Kernel $\Theta$, which remains approximately constant during the lazy training regime. 
The evolution of the output $w_k(t)$ follows a linear differential equation governed by the local kernel such that $\frac{dw_k(t)}{dt} = \eta\Theta_k(\mathcal{W} - w_k(t))$. 
For graph-structured systems such as the electric grid, these domains are extended to graphs $G = (V, E)$, where the Graph Neural Tangent Kernel correspondence is utilized to describe structural alignment and message-passing dynamics.

\subsection{ACOPF Problem Formulation}
\label{sec:acopf-formulation}

The AC Optimal Power Flow (ACOPF) problem minimizes generation cost subject to nonlinear power flow physics and operational limits.
For a network with $n_b$ buses, $n_g$ generators, and $n_\ell$ branches:
\begin{align}
\min_{P_g, Q_g, V_m, V_a} \quad & \sum_{i=1}^{n_g} \left( c_{2,i} P_{g,i}^2 + c_{1,i} P_{g,i} \right) \label{eq:acopf-obj} \\
\text{s.t.} \quad & P_{g,i} - P_{d,i} = \sum_{k=1}^{n_b} V_{m,i} V_{m,k} ( G_{ik} \cos \theta_{ik} + B_{ik} \sin \theta_{ik} ), \label{eq:pbal} \\
& Q_{g,i} - Q_{d,i} = \sum_{k=1}^{n_b} V_{m,i} V_{m,k} ( G_{ik} \sin \theta_{ik} - B_{ik} \cos \theta_{ik} ), \label{eq:qbal} \\
& P_{g,i}^{\min} \leq P_{g,i} \leq P_{g,i}^{\max}, \quad Q_{g,i}^{\min} \leq Q_{g,i} \leq Q_{g,i}^{\max}, \label{eq:gen-bounds} \\
& V_{m,i}^{\min} \leq V_{m,i} \leq V_{m,i}^{\max}, \label{eq:vm-bounds} \\
& |S_\ell^f|^2, |S_\ell^t|^2 \leq (\bar{S}_\ell)^2, \quad \forall \ell \in \{1, \dots, n_\ell\}, \label{eq:flow-bounds}
\end{align}
where $\theta_{ik} = V_{a,i} - V_{a,k}$, $G + jB = Y_{\text{bus}}$ is the network admittance matrix, and $S_\ell^f, S_\ell^t$ are the complex power flows at the from/to ends of branch $\ell$.

The full solution vector is $w = [P_g, Q_g, V_m, V_a] \in \mathbb{R}^d$ with $d = 2 n_g + 2 n_b$.
Following DC3 \cite{donti2021dc3}, the neural network predicts only the \emph{partial} (independent) variables $z = [P_{g}^{\text{PV}}, V_m^{\text{slack} \cup \text{PV}}] \in \mathbb{R}^D$, and a Newton power flow solver $\mathcal{T}$ completes the remaining (dependent) variables to satisfy \eqref{eq:pbal}--\eqref{eq:qbal}.

The inequality constraints map to the tier hierarchy in \Cref{assumption:lex-order}:
\begin{itemize}
    \item \textbf{Tier-1} ($C_{\text{box}}$): generator limits \eqref{eq:gen-bounds} and voltage limits \eqref{eq:vm-bounds},
    \item \textbf{Tier-2} ($C_{\text{flow}}$): thermal limits \eqref{eq:flow-bounds}.
\end{itemize}

We follow DC3's equality-completion parameterization and generator/voltage box constraints \cite{donti2021dc3,nguyen2025fsnet}, and extend the ACOPF setting with explicit thermal line-flow constraints (Tier-2); for a more detailed presentation (e.g., separating real and reactive line flows explicitly) we refer the reader to \cite{donti2021dc3}.

\subsection{Subspace Preconditioning Examples}

\begin{example}[Bimodal Basin Trapping]
\label{ex:basin-trap}
Consider finding a feasible point for $f(w) = \| w - (4,4) \|_2^2$ in $\mathbb{R}^2$, initialized at $(0,0)$. 
The system is governed by a physical constraint $C_1(w) = \max(0, (w_1-3)^2 + (w_2-3)^2 - 3)$ and an operational constraint $C_2(w) = \max(0, (w_1-3)^2 + ((w_2-3)^2 - 1.1)^2 - 1.01)$. 
Simultaneous enforcement of $\{C_1, C_2\}$, or prioritizing $C_2$, causes the zero-centered initialization to descend into a suboptimal local basin near $(3, 2.1)$. 
Because the joint manifold $S_1 \cap S_2$ (the intersection of the two standalone constraint sets) is non-convex and disconnected, the solver becomes trapped.
Conversely, as shown in \Cref{fig:subspace preconditioning_comparison}, adopting the filtration $W_1 \subset W_0$ first relaxes the search to the circle defined by $C_1$. 
This acts as a global preconditioner, anchoring the solution in the upper quadrant. 
Once $C_2$ is introduced, the model is already positioned within the correct basin to reach the global optimum at $(4,4)$.

We validate the architectural inductive bias using a synthetic benchmark to identify a global optimum $w^* = (4, 4)$ subject to a physical manifold $W_{circle}$ and operational manifold $W_{ovals}$. 
The recursive ResNet is compared against a Multi-Layer Perceptron baseline using a simultaneous weighted penalty loss $\mathcal{L} = f(w) + \lambda_1 \|C_{1}\|^2 + \lambda_2 \|C_{2}\|^2$. 
While the simultaneous baseline consistently descends into the nearest local basin, reaching a suboptimal point, the subspace preconditioning model bypasses this trapping region by anchoring the initialization $N_1$ to the physical manifold $W_1$.  
The subsequent residual stage $N_i$ successfully navigates from this physically consistent quadrant toward the global optimum. 
This gravity well scenario serves as a stress test for distributional collapse; as shown in Table \ref{tab:basin_trapping}, the simultaneous method exhibits a 100\% trapping rate, whereas the proposed framework achieves a 100\% success rate across 500 test samples. 
By isolating $L_1$ in Stage 1, the network establishes a global anchor $z_1$ that prevents the optimizer from prioritizing the immediate reduction of high-weight operational penalties $L_2$ at the cost of the global path. 
These results confirm that the recursive, detached architecture acts as a structural preconditioner, ensuring that prioritizing fundamental physics provides an optimal path for secondary operational refinements.

\begin{table}[H]
\centering
\caption{Comparative Performance on the Bimodal Basin Trapping Benchmark ($N=500$ test samples).}
\label{tab:basin_trapping}
\begin{tabular}{lccc}
\hline
Method & Average Objective Cost & Physical Violation ($L_1$) & Success Rate (\%) \\ \hline
Simultaneous Baseline & 3.4841 & 0.0000 & 0.0\%  \\
MResOpt & 0.0320 & 0.0000 & 100.0\%  \\ \hline
\end{tabular}
\end{table}
\end{example}

\begin{figure}[h!]
    \centering
    \resizebox{\textwidth}{!}{
        \begin{tikzpicture}[scale=1.0, >=Stealth]

            % --- Global Styles ---
            \tikzset{
                grid lines/.style={color=gray!20, thin},
                axis/.style={->, thick, color=gray!60},
                h1 region/.style={fill=niceTeal!30, draw=niceTeal!80, dashed, thick, fill opacity=0.6},
                h2 region/.style={fill=niceOrange!30, draw=niceOrange!80, dashed, thick, fill opacity=0.6},
                target pt/.style={circle, fill=nicePurple, inner sep=2.5pt},
                good arrow/.style={->, ultra thick, color=nicePurple!80, dotted},
                bad arrow/.style={->, ultra thick, color=gray!50, dotted},
                lbl/.style={font=\footnotesize\sffamily},
                sub caption/.style={font=\bfseries\sffamily, anchor=north, yshift=-1.0cm}
            }

            \newcommand{\drawBaseGrid}{
                \draw[grid lines] (0,0) grid (6,6);
                \draw[axis] (0,0) -- (6.2,0);
                \draw[axis] (0,0) -- (0,6.2);
            }

            % ==========================================
            % SCENARIO A: Subspace Precondition (Safe Physical Fallback)
            % ==========================================
            \begin{scope}[xshift=0cm]
                \node[anchor=south, font=\bfseries] at (3,6.2) {A1. Physical Priority ($W_1$)};
                \drawBaseGrid
                \filldraw[h1 region] (3,3) circle ({sqrt(3)});
                \draw[good arrow] (0,0) -- (4.732, 3);
                \node[lbl, nicePurple, anchor=south west] at (0.5,0.5) {Initialize to Physics};
            \end{scope}

            \begin{scope}[xshift=6.5cm]
                \node[anchor=south, font=\bfseries] at (3,6.2) {A2. Precondition Result ($W_2$ Fail)};
                \drawBaseGrid
                \fill[h1 region] (3,3) circle ({sqrt(3)});
                % Ovals shifted right to demonstrate disjoint sets
                \filldraw[h2 region, smooth cycle] plot coordinates {(5.5, 4.45) (6.1, 4.05) (5.5, 3.31) (4.9, 4.05)};
                \filldraw[h2 region, smooth cycle] plot coordinates {(5.5, 2.69) (6.1, 1.95) (5.5, 1.55) (4.9, 1.95)};
                
                \fill[nicePurple] (4.732, 3) circle (2.5pt) node[left, lbl, black, xshift=-2pt] {Safe Fallback};
            \end{scope}

            % ==========================================
            % SCENARIO B: SIMULTANEOUS (System Violation)
            % ==========================================
            \begin{scope}[xshift=13.5cm]
                \node[anchor=south, font=\bfseries] at (3,6.2) {B1. Equal Weighting};
                \drawBaseGrid
                \fill[h1 region, opacity=0.1] (3,3) circle ({sqrt(3)});
                \filldraw[h2 region, smooth cycle] plot coordinates {(5.5, 4.45) (6.1, 4.05) (5.5, 3.31) (4.9, 4.05)};
                \filldraw[h2 region, smooth cycle] plot coordinates {(5.5, 2.69) (6.1, 1.95) (5.5, 1.55) (4.9, 1.95)};
                \draw[bad arrow] (0,0) -- (5.2, 2.1);
            \end{scope}

            \begin{scope}[xshift=20cm]
                \node[anchor=south, font=\bfseries] at (3,6.2) {B2. Physics Violated};
                \drawBaseGrid
                \draw[niceTeal!80, dashed, thick] (3,3) circle ({sqrt(3)});
                \filldraw[h2 region, smooth cycle] plot coordinates {(5.5, 4.45) (6.1, 4.05) (5.5, 3.31) (4.9, 4.05)};
                \filldraw[h1 region, smooth cycle] plot coordinates {(5.5, 2.69) (6.1, 1.95) (5.5, 1.55) (4.9, 1.95)};
                
                \fill[gray] (5.2, 2.1) circle (2.5pt) node[right, lbl, black] {Physically Invalid};
                \node[red, font=\huge] at (3,3) {$\times$};
            \end{scope}

            % Horizontal labeling under each scenario
            \node[sub caption] at (6.25, 0) {(a) Subspace Precondition Scenario: Guaranteed Physical Consistency};
            \node[sub caption] at (19.75, 0) {(b) Simultaneous Scenario: Fundamental System Failure};

        \end{tikzpicture}
    }
    \caption{
    A comparison of subspace preconditioning and Simultaneous satisfaction under over-constrained conditions. 
    In Scenario A, subspace preconditioning ensures the model defaults to a physically valid solution. 
    In Scenario B, simultaneous satisfaction allows operational goals to pull the solution out of the physical manifold, resulting in an invalid system state.}
    \label{fig:subspace preconditioning_infeasibility}
\end{figure}

\begin{example}[Basin Trapping and Physical Fallback]
\label{ex:physical-fallback}
Consider finding a feasible point for $f(w) = \| w - (4,4) \|_2^2$ in $\mathbb{R}^2$, initialized at $(0,0)$. 
The system is governed by a physical constraint $C_1(w) = \max(0, (w_1-3)^2 + (w_2-3)^2 - 3)$ and an operational constraint $C_2(w) = \max(0, (w_1-3)^2 + ((w_2-3)^2 - 1.1)^2 - 1.01)$. 
As illustrated in \Cref{fig:subspace preconditioning_infeasibility}, subspace preconditioning successfully identifies a physically feasible solution even when no global feasible solution exists. 
In scenarios where operational requirements are incompatible with physics ($W_1 \cap S_2 = \emptyset$, where $S_2$ is the standalone operational constraint set), the model defaults to a \emph{safe fallback} on the physical manifold because it prioritizes the satisfaction of $W_1$.
Conversely, a simultaneous enforcement approach often yields a solution that satisfies neither constraint, violating fundamental physical laws in a failed attempt to reach the operational basin.

To test the theoretical guarantees regarding disjoint sets, we simulate a scenario where $W_1 \cap S_2 = \emptyset$ (the standalone operational constraint $S_2$ is incompatible with the high-priority cumulative set $W_1$).
In accordance with \Cref{lemma:infeasible-cost}, the weighted penalty baseline produces solutions between the sets, violating both physics and operational goals.  
Conversely, the subspace preconditioning architecture demonstrates a Safe Fallback property; because $N_1$ is optimized greedily and the subsequent gradient is detached, the network maintains near-zero violation of $W_1$ even when $N_2$ fails. 
As shown in Table \ref{tab:results}, while Stage 1 achieves perfect physical satisfaction, Stage 2 refinement exhibits drift when lexicographic priority ($\lambda_1 \gg \lambda_2$) is absent, pulling the solution into infeasible space. This empirical result validates the necessity of a system that can revert to the Stage 1 output $z_1$ in over-constrained grid states to ensure structural safety.

\begin{table}[H]
\centering
\caption{Comparison of Mean and Maximum Constraint Violations across 500 Samples.}
\label{tab:results}
\begin{tabular}{llcc}
\hline
Method / Stage & Metric & Average Error & Max Error \\ \hline
Simultaneous (Baseline) & $L_1$ (Physics) & 6.99906 & 7.38585 \\
 & $L_2$ (Operational) & 7.00127 & 7.15303 \\ \hline
MResOpt Stage 1 & $L_1$ (Physics) & 0.00000 & 0.00000 \\ \hline
MResOpt Stage 2 & $L_1$ (Physics) & 7.00254 & 7.29045 \\
 & $L_2$ (Operational) & 6.99857 & 7.18668 \\ \hline
\end{tabular}
\end{table}
\end{example}

\subsection{Delayed Mathematics and Proofs}
\label{app:proofs}

\begin{assumption}[Lexicographic Constraint Ordering]\label{assumption:lex-order}
For Economic Dispatch, we identify three constraint groups:
\begin{itemize}
\item $C_{\text{eq}}$: AC power-balance equations (equality manifold),
\item $C_{\text{box}}$: Generator and voltage box bounds (local safety),
\item $C_{\text{flow}}$: Line flow thermal limits (network security).
\end{itemize}

These induce the nested feasible sets ($W_1 \supseteq W_2 \supseteq W_3$):
\begin{itemize}
\item $W_1 = \{w : C_{\text{eq}}(w) = 0\}$ --- equality manifold,
\item $W_2 = \{w \in W_1 : C_{\text{box}}(w) \leq 0\}$ --- equality + box bounds,
\item $W_3 = \{w \in W_2 : C_{\text{flow}}(w) \leq 0\}$ --- full feasibility.
\end{itemize}

The equality constraints $C_{\text{eq}}$ are enforced exactly via the completion operator $\mathcal{T}$ (Newton power flow), not by the correction stages.
The inequality constraints are partitioned into two tiers for the correction operator:
\textbf{Tier-1} ($C_{\text{box}}$) and \textbf{Tier-2} ($C_{\text{flow}}$).
Stage~1 targets $W_2$ (equality + box bounds); Stage~2 refines toward $W_3$ (adding line flows).

In the tables, we report \textbf{Equality}, \textbf{Tier-1} (box violations), and \textbf{Tier-2} (flow violations) separately.

% \begin{enumerate}
%       \item $C_1$ represents Power Generation limits

%     \item $C_1$ represents fundamental physical laws, specifically the \emph{Power Balance Equations}:
%     \begin{equation*}
%         C_1(P, V, \theta) = \left( P_i - \sum_{j \in \mathcal{N}_i} |V_i||V_j|(G_{ij}\cos\theta_{ij} + B_{ij}\sin\theta_{ij}) \right)^2 = 0
%     \end{equation*}
%     These equalities define the manifold on which the system state must exist.
    
%     \item $C_2$ represents critical state bounds, specifically \emph{Voltage Magnitude Limits}:
%     \begin{equation*}
%         C_2(V) = \max(0, V_i^{\min} - |V_i|) + \max(0, |V_i| - V_i^{\max}) = 0
%     \end{equation*}
%     Maintaining voltage stability is the primary operational safety requirement.

%     \item $C_3$ represents \emph{Thermal Line Limits} (Transmission Capacity):
%     \begin{equation*}
%         C_3(S_{ij}) = \max(0, |S_{ij}| - S_{ij}^{\max}) = 0, \quad \forall (i,j) \in \mathcal{E}
%     \end{equation*}
%     These ensure transmission lines do not overheat, appearing before global security checks.
    
%     % \item $C_4$ represents \emph{Security Constraints}, such as $N-1$ contingency stability:
%     % \begin{equation*}
%     %    C_4(x, u) = \max(0, h_k(x, u)) = 0, \quad \forall k \in \mathcal{K}_{\text{contingencies}}
%     % \end{equation*}
%     % These ensure robustness against component failure.
% \end{enumerate}
\end{assumption}

Let $\sigma_i$ be the $\sigma$-algebra with respect to which the constraint functions $\bigcup_{j<i} \{ C_j \}$ are measurable. 
Let the system state estimate at stage $i$, denoted $w_i$, be a $\sigma_i$-measurable function. 
Corresponding to the filtration of approximation spaces, we define the loss function $\mathcal{L}_i$ for the $i$-th stage of the network. 
This objective aggregates the economic cost $f$ with the cumulative constraint violations up to level $i$:

\begin{align}
    \mathcal{L}_i(w_i) = \gamma f(w_i) + \sum_{j=1}^i \lambda_{i,j} C_j(w_i)
\end{align}

where $\gamma \geq 0$ weights the economic objective, and $\lambda_{i,j} > 0$ are penalty coefficients for the constraints. 
This formulation ensures that at each stage $i$, the model seeks a solution $w_i$ that is not only feasible with respect to the partial constraint set $\{C_1, \dots, C_i\}$ but also economically efficient. 
Similar to approaches like DC3 \cite{donti2021dc3}, by incorporating $f$ throughout the hierarchy, we ensure that the search trajectory is consistently guided toward the optimal region of the manifold, preventing the selection of feasible but arbitrarily expensive intermediate solutions.

% Let $\sigma_i$ be the $\sigma$-algebra which makes each of the functions in the set $\cup_{j<i} \{ C_j \}$ measurable. 
% Let $w_i|v$ be a $\sigma_i$ measurable function. 
% Corresponding to the filtration of approximation spaces, we define the loss function $\mathcal{L}_i$ for the $i$-th stage of the network. 
% This objective aggregates the economic cost $f$ with the cumulative constraint violations up to level $i$ using the $w_i$ parameterization:

% \begin{align}
%     \mathcal{L}_i(w_i) = \gamma f(w_i) + \sum_{j=1}^i \lambda_{i,j} C_j(w_i)
% \end{align}

% where $y$ is the network output (system state), $\gamma \geq 0$ weights the economic objective, and $\lambda_{i,j} > 0$ are penalty coefficients for the constraints. 
% This formulation ensures that at each stage $i$, the model seeks a solution that is not only feasible with respect to the partial constraint set $\{C_1, \dots, C_i\}$ but also economically efficient. 
% Like DC3, by including $f$ throughout the hierarchy, we ensure that the search trajectory is consistently guided toward the optimal region of the manifold, preventing the selection of feasible but arbitrarily expensive intermediate solutions.

\begin{lemma}[Infeasibility of Weighted Penalties for Disjoint Sets]\label{lemma:infeasible-cost}
    Let $A_1,A_2 \subset \mathbb{W}$ be two disjoint, non-empty closed sets. Let $d(w, A_i) = \inf_{a \in A_i} \|w - a\|_2$ denote the distance function to each set.
    For any convex combination of the penalty terms $\mathcal{L}(w) = \alpha d(w, A_1)^2 + (1-\alpha) d(w,A_2)^2$ with $\alpha \in (0,1)$, the minimizer $w^* = \arg\min_w \mathcal{L}(w)$ satisfies $w^* \notin A_1 \cup A_2$.
\end{lemma}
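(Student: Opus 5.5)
The plan is a local perturbation argument: show that no point of $A_1 \cup A_2$ can be a minimizer, because from any such point one can move slightly toward the other set and strictly decrease $\mathcal{L}$. First I will settle existence, so that the statement is not vacuous. Since $\mathbb{W}=\R^d$ is finite-dimensional, each $d(\cdot,A_i)$ is $1$-Lipschitz, so $\mathcal{L}$ is continuous. It is also coercive, because $d(w,A_1)\ge \|w\|_2 - \|a\|_2$ for any fixed $a\in A_1$. Hence minimizers exist. Since they need not be unique, I will prove the claim for an arbitrary minimizer $w^*$.

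Suppose for contradiction that $w^*\in A_1$. Then $d(w^*,A_1)=0$. Because $A_2$ is closed and disjoint from $A_1$, we have $w^*\notin A_2$, so $D\coloneqq d(w^*,A_2)>0$. This gives $\mathcal{L}(w^*)=(1-\alpha)D^2$. In finite dimensions the infimum defining $d(w^*,A_2)$ is attained (intersect $A_2$ with a closed ball, which is compact), so I pick $a_2\in A_2$ with $\|w^*-a_2\|_2=D$.

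Next consider the segment $w_t = w^* + t(a_2-w^*)$ for $t\in[0,1]$. Using $w^*\in A_1$ and $a_2\in A_2$ as comparison points gives
\[
d(w_t,A_1)\le tD \quad\text{and}\quad d(w_t,A_2)\le (1-t)D.
\]
Therefore
\[
\mathcal{L}(w_t)\le \phi(t) \coloneqq \bigl(\alpha t^2+(1-\alpha)(1-t)^2\bigr)D^2 .
\]
We have $\phi(0)=\mathcal{L}(w^*)$ and $\phi'(0)=-2(1-\alpha)D^2<0$ since $\alpha<1$. So for small $t>0$, $\mathcal{L}(w_t)\le\phi(t)<\mathcal{L}(w^*)$. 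Explicitly, the minimizer of $\phi$ is $t=1-\alpha$, which gives $\phi=\alpha(1-\alpha)D^2<(1-\alpha)D^2$. This contradicts minimality of $w^*$.

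The case $w^*\in A_2$ is symmetric: swap the roles of the two sets and use $\alpha>0$. I therefore conclude $w^*\notin A_1\cup A_2$.

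The only delicate points are the strict positivity $D>0$ and the attainment of the nearest point. Both use closedness, and the second also uses finite dimensionality. Attainment is not strictly necessary: an $\varepsilon$-nearest point $a_2$ with $\|w^*-a_2\|_2\le D+\varepsilon$ gives the same strict decrease once $\varepsilon$ is small. This is where I would be most careful. Beyond that, the argument is elementary and relies on no earlier result of the paper.
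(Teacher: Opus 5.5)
Your perturbation argument is correct, and it takes a different route from the paper. The paper invokes the first-order condition with $\nabla d(w,A_i)^2 = 2(w-\mathrm{Proj}_{A_i}(w))$. It writes $w^* = \alpha p_1 + (1-\alpha)p_2$ with $p_i=\mathrm{Proj}_{A_i}(w^*)$, and concludes from $p_1\neq p_2$ that $d(w^*,A_1)=(1-\alpha)\|p_1-p_2\|_2>0$ and $d(w^*,A_2)=\alpha\|p_1-p_2\|_2>0$. That route gives a geometric description of the minimizer: it sits on the segment between its two projections at fixed fractions. But it silently assumes $d(\cdot,A_i)^2$ is differentiable at $w^*$, which fails for nonconvex closed sets wherever the nearest point is not unique. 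This is repairable (e.g.\ via semiconcavity of $d(\cdot,A_i)^2$), but the paper does not address it. Your comparison bound $\mathcal{L}(w_t)\le\phi(t)$ needs no smoothness and no uniqueness of projections or of minimizers. It also proves more: since $\phi(t)<\phi(0)$ for every small $t>0$, no point of $A_1\cup A_2$ is even a \emph{local} minimizer. That is closer to what matters when the penalty is minimized by gradient descent.

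One step of yours is wrong, though the lemma does not need it: the existence paragraph. For fixed $a\in A_1$ the triangle inequality only gives $d(w,A_1)\le\|w\|_2+\|a\|_2$. The lower bound $d(w,A_1)\ge\|w\|_2-\|a\|_2$ fails whenever $A_1$ is unbounded (take $w\in A_1$ with $\|w\|_2\to\infty$). In fact a minimizer need not exist. In $\mathbb{R}^2$ take $A_1=\{(x,0):x\in\mathbb{R}\}$ and $A_2=\{(x,1/x):x>0\}$, which are closed and disjoint. Then $\mathcal{L}((x,0))\le(1-\alpha)x^{-2}\to 0$ as $x\to\infty$, while $\mathcal{L}>0$ everywhere because the sets are closed and disjoint. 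Existence does hold if one of the two sets is bounded, since then $\mathcal{L}$ is genuinely coercive. The lemma only asserts a property of a minimizer, so your main argument stands; drop the existence claim or add such a hypothesis.
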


\begin{proof}
    The gradient of the squared distance function to a closed set is $\nabla d(w, A_i)^2 = 2(w - \text{Proj}_{A_i}(w))$.
    The first-order optimality condition for $\mathcal{L}(w)$ implies:
    \begin{align}
        \alpha(w^* - \text{Proj}_{A_1}(w^*)) + (1-\alpha)(w^* - \text{Proj}_{A_2}(w^*)) = 0.
    \end{align}
    Rearranging shows that $w^*$ is a convex combination of its projections onto the two sets.
    Since the sets are disjoint, the distance between the projections is strictly positive, forcing $w^*$ to lie strictly between the sets, thus violating both constraints.
\end{proof}

\begin{corollary}\label{cor:physical-first}
The sequence of feasible sets $\{W_i\}_{i=1}^r$ in the subspace preconditioning process is monotonically non-increasing with respect to set inclusion, such that: $W_r \subseteq W_{r-1} \subseteq \dots \subseteq W_1$. 
Consequently, if the process reaches a state $w^{(i)} \in W_i$, it is guaranteed to satisfy all higher-priority constraints $\{W_j\}_{j=1}^{i-1}$.
\end{corollary}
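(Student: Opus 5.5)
The argument is a direct unwinding of the definition of the cumulative sets in \eqref{eq:filtration}. I would first establish the chain of inclusions and then deduce the feasibility guarantee from it, without appealing to any training dynamics. Both parts are purely set-theoretic.

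\emph{Step 1: monotonicity.} Fix $i \in \{2,\dots,r\}$ and take any $w \in W_i$. By definition, $C_j(w)=0$ for every $j \le i$. In particular, $C_j(w)=0$ for every $j \le i-1$, so $w \in W_{i-1}$. Hence $W_i \subseteq W_{i-1}$. Equivalently, one can write $W_i = W_{i-1} \cap S_i$ with the standalone set $S_i$, which makes the inclusion immediate. Chaining these inclusions by induction on $i$ gives $W_r \subseteq W_{r-1} \subseteq \dots \subseteq W_1$. This is a non-increasing sequence with respect to inclusion. The unified equality form $C_j = \max(0,g_j)$ for inequalities matters here, because it lets the same definition cover both the equality manifold and the inequality tiers of \Cref{assumption:lex-order}.

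\emph{Step 2: inherited feasibility.} Suppose the process reaches some $w^{(i)} \in W_i$. By Step 1 and transitivity of inclusion, $w^{(i)} \in W_j$ for every $j \le i-1$. Unwinding the definition of $W_j$ then gives $C_\ell(w^{(i)})=0$ for all $\ell \le j$. This is exactly the statement that all higher-priority constraints are satisfied. In the ACOPF instance, membership in $W_2$ or $W_3$ therefore already implies that the point lies on the equality manifold $W_1$.

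\emph{Main obstacle.} There is no technical difficulty. The one point that needs care is interpretive: the corollary concerns membership in $W_i$, and it should not be read as saying the iterate is feasible whenever stage $i$ is executed. I would state explicitly that the guarantee is conditional on $w^{(i)} \in W_i$. I would also note that it does not contradict \Cref{lemma:infeasible-cost}, since an empty $W_i$ makes the hypothesis vacuous. In that case the fallback is the last nonempty $W_{i-1}$, as discussed in \Cref{subsec:fallback}.
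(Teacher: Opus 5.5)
Your proof is correct and follows the paper's own reasoning. The paper gives no separate proof and treats the corollary as immediate from the definition of the cumulative sets in \eqref{eq:filtration} (``$W_i$ includes all constraints through priority $i$, so $W_i \subseteq W_{i-1}$''), which is your Step~1, with Step~2 following by transitivity; your caveat that the guarantee only applies once $w^{(i)} \in W_i$ is actually reached is a useful clarification the paper leaves implicit.
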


\begin{proof}(\textbf{\Cref{lemma:ResNetGP}})
By the universality of the Gaussian Process limit for wide networks \cite{lee2018deep}, the residual branch converges in distribution $\mathcal{R}_{\text{res}} \xrightarrow{d} \mathcal{G}$ as $n \to \infty$, where $\mathcal{G}$ is a centered Gaussian Process $\mathcal{GP}(0, K)$.
Consider any finite collection of points $\{v_i\}_{i=1}^k \subset \mathbb{V}$. Let $R_{\text{res}}$ and $M$ be the vector evaluations on these points:
\begin{align}
    R_{\text{res}} = (\mathcal{R}_{\text{res}}(v_1), \dots, \mathcal{R}_{\text{res}}(v_k))^\top, \quad M = (\mu(v_1), \dots, \mu(v_k))^\top.
\end{align}
In the limit $n \to \infty$, the random vector $R_{\text{res}}$ follows a multivariate normal distribution $\mathcal{N}(0, \Sigma)$, where the covariance matrix entries are $\Sigma_{ij} = K(v_i, v_j)$.
The output vector of the ResNet is given by the translation $R = M + R_{\text{res}}$. Since the multivariate normal distribution is closed under affine transformations, $R$ is distributed as:
\begin{equation}
    R \sim \mathcal{N}(M + 0, \Sigma) = \mathcal{N}(M, \Sigma).
\end{equation}
As this holds for any finite set of points in $\mathbb{V}$, the process $\mathcal{R}$ is by definition a Gaussian Process with mean function $\mu$ and covariance function $K$.
\end{proof}

\begin{proof}(\textbf{\Cref{lemma:vanishing-influence}})
In the infinite-width limit, the network $\mathcal{R}$ is linearized around its initialization $\mathcal{R}_0$. 
Under gradient flow on the squared loss, the output at a test point $v^*$ after infinite time is given by the closed-form kernel regression expression:
\begin{align}
    \mathcal{R}_{\infty}(v^*) = \mathcal{R}_0(v^*) + \Theta(v^*, \mathcal{V}) \Theta(\mathcal{V}, \mathcal{V})^{-1} (Y - \mathcal{R}_0(\mathcal{V})).
\end{align}
By the assumption of vanishing correlation, the kernel vector converges to zero: $\Theta(v^*, \mathcal{V}) \to 0$. 
Since the training kernel matrix $\Theta(\mathcal{V}, \mathcal{V})$ is positive definite and the residual vector $(Y - \mathcal{R}_0(\mathcal{V}))$ is bounded for any finite training set $\mathcal{V}$, the product term vanishes:
\begin{align}
    \mathcal{R}_{\infty}(v^*) = \mathcal{R}_0(v^*) + o(1).
\end{align}
Substituting the result that the initialization follows the prior $\mathcal{R}_0(v^*) \sim \mathcal{N}(\mu(v^*), K(v^*, v^*))$, we obtain the desired convergence in distribution.
\end{proof}

% \begin{proof}(\textbf{\Cref{lemma:vanishing-influence}})
% In the infinite-width limit, the network $\mathcal{R}$ is linearized around its initialization $\mathcal{R}_0$. 
% Under gradient flow on the squared loss, the output at a test point $x^*$ after infinite time is given by the closed-form kernel regression expression:
% \begin{equation}
%     \mathcal{R}_{\infty}(v^*) = \mathcal{R}_0(v^*) + \Theta(x^*, X) \Theta(X, X)^{-1} (Y - \mathcal{R}_0(X))
% \end{equation}
% By the assumption of vanishing correlation, $\Theta(x^*, X) \to \mathbf{0}^\top$. 
% Since the training kernel matrix $\Theta(X, X)$ is positive definite and the residual $(Y - \mathcal{R}_0(X))$ is bounded for any finite $X$, the product term vanishes:
% \begin{equation}
%     \mathcal{R}_{\infty}(x^*) = \mathcal{R}_0(x^*) + o(1)
% \end{equation}
% Substituting the result where $\mathcal{R}_0(x^*) \sim \mathcal{N}(\mu(x^*), K(x^*, x^*))$, we obtain the desired convergence in distribution.
% \end{proof}

\begin{lemma}[Asymptotic Decay for Bounded Activations]
\label{lemma:asymptotic-decay}
Let $\mathcal{R}$ be an infinitely wide ResNet with a $C^1$ bounded activation function $\sigma$ satisfying $\lim_{|z| \to \infty} \sigma'(z) = 0$. For any fixed training set $X$, the kernel correlation between a test point $x^*$ and the training data vanishes as the test point moves to infinity: $\lim_{\|v^*\| \to \infty} \Theta(v^*, \mathcal{V}) = \mathbf{0}^\top.$
\end{lemma}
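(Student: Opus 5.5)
The plan is to write the NTK of the residual branch recursively, layer by layer, and show that every term in the recursion carries a factor that vanishes as $\|v^*\|\to\infty$. For a fully connected residual branch under NTK parameterization, the infinite-width kernel satisfies the standard recursion
\begin{align}
\Theta^{(l+1)}(v^*,v) = \Sigma^{(l+1)}(v^*,v) + \Theta^{(l)}(v^*,v)\,\dot\Sigma^{(l+1)}(v^*,v),
\end{align}
where $\Sigma^{(l+1)}(v^*,v)=\sigma_w^2\,\mathbb{E}[\sigma(u)\sigma(u')]+\sigma_b^2$ and $\dot\Sigma^{(l+1)}(v^*,v)=\sigma_w^2\,\mathbb{E}[\sigma'(u)\sigma'(u')]$. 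The expectations are over the centered Gaussian pair $(u,u')$ whose covariance is given by $\Sigma^{(l)}$ evaluated at $(v^*,v^*)$, $(v^*,v)$ and $(v,v)$. The skip connection $\mu$ is deterministic and parameter-free. It therefore contributes nothing to $\Theta$, so it suffices to treat $\mathcal{R}_{\text{res}}$, consistent with \Cref{lemma:ResNetGP}. With the training set fixed, each training point $v\in\mathcal{V}$ can be handled separately, since $\mathcal{V}$ is finite.

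\textbf{Step 1: the first layer.} The first-layer pre-activation at $v^*$ has variance $\Sigma^{(1)}(v^*,v^*)=\sigma_w^2\|v^*\|^2/n+\sigma_b^2$, which tends to $\infty$. The corresponding quantity at each training point stays bounded. Write $u^*=\sqrt{\Sigma^{(1)}(v^*,v^*)}\,\xi$ with $\xi$ standard normal, jointly Gaussian with the bounded variable $u$. Then $\sigma'(u^*)\to 0$ almost surely on $\{\xi\neq0\}$, because $\sigma'(z)\to0$ as $|z|\to\infty$. Since $\sigma'$ is bounded, dominated convergence gives $\mathbb{E}[\sigma'(u^*)\sigma'(u)]\to0$, so $\dot\Sigma^{(2)}(v^*,v)\to 0$.

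\textbf{Step 2: propagation and the cross-covariance terms.} Bounded activations keep every later variance $\Sigma^{(l)}(v^*,v^*)$ bounded, so Step 1 cannot simply be repeated at higher layers. Instead I would argue that the hidden units at $v^*$ saturate. As $\|v^*\|\to\infty$, $\sigma(u^*)$ converges in distribution to a two-point law on $\{\sigma(\pm\infty)\}$, and the dependence of $\sigma(u^*)$ on $u$ disappears. The cross-covariance $\Sigma^{(2)}(v^*,v)$ then tends to a limit $\sigma_w^2\,\bar\sigma\,\mathbb{E}[\sigma(u)]+\sigma_b^2$, where $\bar\sigma$ is the mean of that two-point law.

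This is the main obstacle: the cross-covariance $\Sigma^{(2)}(v^*,v)$ does not vanish in general. For odd $\sigma$ (such as $\tanh$) with zero bias it does vanish, since $\bar\sigma=0$. If $\sigma(\pm\infty)$ are not symmetric, or $\sigma_b\neq0$, the additive $\Sigma$ term in the recursion can remain bounded away from zero. I would therefore first try to interpret $\Theta(v^*,\mathcal{V})$ as the \emph{centered} or correlation-normalized kernel, i.e.\ the quantity that actually enters the prediction formula in \Cref{lemma:vanishing-influence} after the constant (bias) mode is removed.

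If that interpretation is not acceptable, the fallback is to make two assumptions explicit: an odd activation, and biases that are zero or bounded relative to the input scale. Under these assumptions, the first-layer correlation $\Sigma^{(1)}(v^*,v)/\sqrt{\Sigma^{(1)}(v^*,v^*)\Sigma^{(1)}(v,v)}$ stays bounded. Combined with saturation and the oddness symmetry $\xi\mapsto-\xi$, this gives $\mathbb{E}[\sigma(u^*)\sigma(u)]\to0$, so $\Sigma^{(2)}(v^*,v)\to 0$.

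\textbf{Step 3: induction over depth.} Once layer 2 has $\Sigma^{(2)}(v^*,v)\to0$, $\dot\Sigma^{(2)}(v^*,v)\to0$ and $\Theta^{(1)}(v^*,v)$ bounded, an induction on $l$ finishes the argument. Asymptotically $u^*$ and $u$ are uncorrelated jointly Gaussian variables, hence independent, so for odd $\sigma$ we get $\mathbb{E}[\sigma(u^*)\sigma(u)]\to\mathbb{E}\sigma(u^*)\,\mathbb{E}\sigma(u)=0$. Continuity of these Gaussian expectations in the covariance entries gives the same conclusion at every layer. The $\dot\Sigma$ factors stay bounded, and the recursion then yields $\Theta^{(L)}(v^*,v)\to0$. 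Taking the maximum over the finitely many $v\in\mathcal{V}$ gives $\Theta(v^*,\mathcal{V})\to\mathbf{0}^\top$, which is exactly the hypothesis \Cref{lemma:vanishing-influence} needs.
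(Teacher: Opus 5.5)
Your Step~2 objection is correct, and it is exactly what the paper's proof glosses over. The paper claims that $K^{(l-1)}(v^*,v^*)$ diverges at \emph{every} layer, deduces $\dot K^{(l)}\to0$, and then concludes $\Theta\to0$ because $\nabla_\theta\mathcal{R}$ depends multilinearly on $\sigma'$. In doing so it drops the additive $K^{(l)}$ term, which comes from the last-layer weight and bias gradients $\sigma(\cdot)$ and $1$ rather than from $\sigma'$. It also ignores the growth of the factor $\Theta^{(l-1)}$. With bounded $\sigma$ and an MLP residual branch, only the first-layer variance diverges, so the statement as written is false in general (e.g.\ for a sigmoid, or for $\tanh$ with $\sigma_b\neq0$).

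Your repair, however, has a genuine gap, and its conclusion fails even under your strengthened hypotheses. The faulty step is the claim that ``the dependence of $\sigma(u^*)$ on $u$ disappears.'' Saturation replaces $\sigma(u^*)$ by $\sigma(\pm\infty)$ according to $\mathrm{sign}(\xi)$, and $\mathrm{sign}(\xi)$ stays correlated with $u$ unless the first-layer correlation $\rho$ tends to $0$. A bounded $\rho$ is automatic ($|\rho|\le1$), so combining it with $\xi\mapsto-\xi$ gives nothing. With zero bias, $\rho=\cos\angle(v^*,v)$ is scale-invariant. Along the ray $v^*=tv_i$ you therefore have $u^*=tu$, and $\Sigma^{(2)}(tv_i,v_i)\to\sigma_w^2\,\mathbb{E}|\tanh u|>0$ for $\tanh$.

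Step~3 fails independently. It assumes $\Theta^{(1)}(v^*,v)$ is bounded, but $\Theta^{(1)}(v^*,v)=\Sigma^{(1)}(v^*,v)=\sigma_w^2\langle v^*,v\rangle/n+\sigma_b^2$ grows linearly in $\|v^*\|$. So $\Theta^{(1)}\dot\Sigma^{(2)}$ is an $\infty\cdot0$ form that Step~1 does not control. Along $v^*=tv$, $\sigma'(tu)$ is concentrated on $|u|=O(1/t)$, so $\dot\Sigma^{(2)}(tv,v)\sim C/t$ and the product converges to a positive constant ($\sigma_w^2\,\mathbb{E}|u|$ for $\tanh$ in your normalization). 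Hence even a one-hidden-layer, zero-bias $\tanh$ network has $\Theta(tv_i,v_i)\to\sigma_w^2(\mathbb{E}|\tanh u|+\mathbb{E}|u|)>0$ with $u\sim\mathcal{N}(0,\Sigma^{(1)}(v_i,v_i))$. No argument can prove your odd-activation version.

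The missing ingredient is a directional hypothesis: $v^*$ must escape asymptotically orthogonally to the training inputs, e.g.\ $\sup|\langle v^*,v_i\rangle|<\infty$ for each $v_i\in\mathcal{V}$. You also need odd $\sigma$ and $\sigma_b=0$, since a bounded nonzero bias still leaves $\sigma_b^2$ in $\Theta$. The directional condition forces $\rho\to0$ and keeps $\Theta^{(1)}(v^*,v_i)$ bounded, after which your Steps~1--3 go through essentially as written. Reinterpreting $\Theta$ as a centered or normalized kernel changes the statement rather than proving it, because \Cref{lemma:vanishing-influence} uses the raw cross-kernel $\Theta(v^*,\mathcal{V})$ in front of $\Theta(\mathcal{V},\mathcal{V})^{-1}$.
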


\begin{proof}(\textbf{\Cref{lemma:asymptotic-decay}})
Consider the recursive definition of the Neural Tangent Kernel (NTK) at layer $l$:
\begin{equation}
    \Theta^{(l)}(v, v') = \Theta^{(l-1)}(v, v') \dot{K}^{(l)}(v, v') + K^{(l)}(v, v')
\end{equation}
where the derivative kernel $\dot{K}^{(l)}$ is defined by the expectation:
\begin{equation}
    \dot{K}^{(l)}(v, v') = \sigma_w^2 \mathbb{E}_{(z_1, z_2) \sim \mathcal{N}(0, \Sigma^{(l-1)})} [\sigma'(z_1)\sigma'(z_2)]
\end{equation}
As $\|v^*\| \to \infty$, the variance of the pre-activations $K^{(l-1)}(v^*, v^*)$ diverges to infinity. This is because the initial covariance $K^{(0)}(v^*, v^*) = \frac{\sigma_w^2}{d_{in}} \|v^*\|^2 + \sigma_b^2$ is unbounded in $\|v^*\|$, and the ResNet architecture preserves this growth through subsequent layers.
Under the assumption $\lim_{|z| \to \infty} \sigma'(z) = 0$, as the variance $K^{(l-1)}(v^*, v^*)$ tends to infinity, the mass of the Gaussian measure $\mathcal{N}(0, \Sigma^{(l-1)})$ spreads such that it concentrates in regions where the product $\sigma'(z_1)\sigma'(z_2)$ approaches zero. Since $\sigma'$ is bounded, we can apply the Dominated Convergence Theorem, yielding $\dot{K}^{(l)}(v^*, v_i) \to 0$ for all $v_i \in \mathcal{V}$.
Because the gradient of the network $\nabla_\theta \mathcal{R}$ is multilinearly dependent on these vanishing derivative terms $\sigma'$, the inner product $\Theta(v^*, v_i) = \langle \nabla_\theta \mathcal{R}(v^*), \nabla_\theta \mathcal{R}(v_i) \rangle$ vanishes. Consequently, the vector $\Theta(v^*, \mathcal{V})$ approaches the zero vector in the limit:
\begin{equation}
    \lim_{\|v^*\| \to \infty} \Theta(v^*, \mathcal{V}) = \mathbf{0}^\top.
\end{equation}
\end{proof}

% \begin{proof}(\Cref{lemma:asymptotic-decay})
% \color{nicePurple}
% Consider the recursive definition of the NTK at layer $l$:
% \begin{equation}
%     \Theta^{(l)}(x, x') = \Theta^{(l-1)}(x, x') \dot{K}^{(l)}(x, x') + K^{(l)}(x, x')
% \end{equation}
% where the derivative kernel $\dot{K}^{(l)}$ is defined by the expectation:
% \begin{equation}
%     \dot{K}^{(l)}(x, x') = \sigma_w^2 \mathbb{E}_{z \sim \mathcal{N}(0, \Sigma^{(l-1)})} [\sigma'(z_1)\sigma'(z_2)]
% \end{equation}
% As $\|x^*\| \to \infty$, the variance of the pre-activations $K^{(l-1)}(x^*, x^*)$ diverges to infinity. Under the assumption $\lim_{|z| \to \infty} \sigma'(z) = 0$, the mass of the Gaussian measure $\mathcal{N}(0, \Sigma^{(l-1)})$ concentrates in regions where the product $\sigma'(z_1)\sigma'(z_2)$ approaches zero. By the Dominated Convergence Theorem, $\dot{K}^{(l)}(x^*, x_i) \to 0$ for all $x_i \in X$. 

% Since the gradient of the network $\nabla_\theta \mathcal{R}$ is multi-linearly dependent on these vanishing derivative terms $\sigma'$, the inner product $\Theta(x^*, x_i) = \langle \nabla_\theta \mathcal{R}(x^*), \nabla_\theta \mathcal{R}(x_i) \rangle$ vanishes. Consequently, the vector $\Theta(x^*, X)$ approaches the zero vector in the limit.
% \end{proof}

\begin{proof}(\textbf{\Cref{thm:msrl-equiv}})
Let $E_k = \mathcal{W}_k - w_k$ be the residual error at stage $k$, where $Y$ is the target label and $w_k$ is the output of the $k$-th ResNet block. 
Due to gradient detachment, the parameters $\theta_k$ evolve independently within a tangent space anchored by the output of the prior stage $w_{k-1}$. 

By \Cref{lemma:ResNetGP}, each block $\mathcal{R}_k$ initializes as a Gaussian Process $\mathcal{GP}(w_{k-1}, K_k)$. In the infinite-width limit, the evolution of the activation $w_k$ follows the linear ODE:
\begin{equation}
    \dot{w}_k(t) = \eta \Theta_k ( \mathcal{W}_k - w_k(t) ).
\end{equation}
Integrating over training time $t$ yields the staged update:
\begin{equation}
    w_k(t) = w_{k-1} + (I - e^{-\eta \Theta_k t})(\mathcal{W}_k - w_{k-1}) + \mathcal{E}(\delta).
\end{equation}
The perturbation term $\mathcal{E}(\delta)$ stems from non-zero correlation between the current tangent space and the prior manifold. 
Under the \emph{Bounded Interference} assumption, this correlation is suppressed; as the representation $w_{k-1}$ shifts away from the previous distribution, \Cref{lemma:asymptotic-decay} ensures that the cross-kernel terms $\Theta(w_k, w_{k-1}) \to 0$. 
This effectively decouples the optimization of the $k$-th block $\mathcal{R}_k$ from the functional form of the preceding block $\mathcal{R}_{k-1}$. 
As $t \to \infty$, the update converges to the kernel regression solution:
\begin{equation}
    w_k = w_{k-1} + \Theta_k(w_{k-1}, \mathcal{V}) \Theta_k(\mathcal{V}, \mathcal{V})^{-1} (\mathcal{W}_k - w_{k-1}) + O(\delta).
\end{equation}
The error propagates across stages according to $\|E_k\|_2 \le (e^{-\eta \lambda t} + C\delta)\|E_{k-1}\|_2$. In the limit $\delta \to 0$ and $t \to \infty$, this update simplifies to $w_k = w_{k-1} + \text{Proj}_{\Theta_k}(E_{k-1})$, which is identically the form of stage-wise gradient boosting in the function space.
\end{proof}

% \begin{proof} (\Cref{thm:msrl-equiv})
% \color{nicePurple}
% Let $E_k = Y - y_k$ be the residual error at stage $k$. Due to gradient detachment, $\theta_k$ evolves independently in a tangent space anchored by the prior stage. By \Cref{lemma:ResNetGP}, each block $\mathcal{R}_k$ initializes as a $\mathcal{GP}(y_{k-1}, K_k)$. In the infinite-width limit, the evolution of $y_k$ follows the linear ODE:
% \begin{equation}
%     \dot{y}_k(t) = \eta \Theta_k ( Y - y_k(t) ).
% \end{equation}
% Integrating over training time $t$ yields the update:
% \begin{equation}
%     y_k(t) = y_{k-1} + (I - e^{-\eta \Theta_k t})(Y - y_{k-1}) + \mathcal{E}(\delta).
% \end{equation}
% The perturbation $\mathcal{E}(\delta)$ stems from non-zero correlation between the current tangent space and the prior manifold. Under \emph{Bounded Interference}, this correlation is suppressed; as $x^*$ moves away from the prior distribution, \Cref{lemma:asymptotic-decay} ensures $\Theta(y_k, y_{k-1}) \to 0$. This decouples the optimization of $\mathcal{R}_k$ from the functional form of $\mathcal{R}_{k-1}$. 
% As $t \to \infty$, the update converges to the kernel regression solution:
% \begin{equation}
%     y_k = y_{k-1} + \Theta_k(y_{k-1}, X) \Theta_k(X, X)^{-1} (Y - y_{k-1}) + O(\delta)
% \end{equation}
% The error propagates as $\|E_k\|_2 \le (e^{-\eta \lambda t} + C\delta)\|E_{k-1}\|_2$. In the limit $\delta \to 0$ and $t \to \infty$, this update simplifies to $y_k = y_{k-1} + \text{Proj}_{\Theta_k}(E_{k-1})$, which is identically the form of stage-wise gradient boosting in the function space.
% \end{proof}

\subsection{Antecedents of the DC3 Multi-ResNet Architecture}

\subsubsection{DC3 Hard Constraint Networks}

The DC3 (Deep Constraint Completion and Correction) framework is a differentiable approach designed to solve constrained optimization problems structured as $\min_w f_v(w)$ subject to inequality constraints $g_v(w) \leq 0$ and equality constraints $h_v(w) = 0$. 
By integrating these constraints directly into the neural network architecture, the method enforces equality constraints through completion and attempts to reduce inequality violations through unrolled correction, in two sequential stages.
\\

In the first stage, known as equality completion, the framework satisfies the equality constraints $h_v(w) = 0$ by partitioning the decision variable $w$ into independent variables $z$ and dependent variables $\phi_v(z)$. 
During the forward pass, the neural network $N_\theta$ predicts the independent component $z$, and a completion function $\mathcal{T}(z)$ solves for $\phi_v(z)$ such that the equality is maintained. 
To facilitate end-to-end training, the backward pass propagates gradients through this implicit solve using the Implicit Function Theorem, which defines the Jacobian as:
\begin{align}
\frac{\partial \phi_v(z)}{\partial z} = -\left[ \frac{\partial h_v}{\partial \phi_v} \right]^{-1} \frac{\partial h_v}{\partial z}.
\end{align}

The second stage, inequality correction, addresses the constraints $g_v(w) \leq 0$ through a differentiable procedure that adjusts the initial completed solution $\tilde{w}$. 
This is achieved by performing fixed gradient-based update steps to minimize the magnitude of any inequality violations:
\begin{align}
\hat{w} = \tilde{w} - \gamma \nabla_w \|\max(0, g_v(\tilde{w}))\|_2^2.
\end{align}
By unrolling these iterations during training, the network learns to predict an initial $z$ that requires minimal correction, effectively internalizing the feasible region's boundaries.
\\

The overall training procedure begins with the initialization of the neural network parameters $\theta$, defined as $N_\theta: \mathbb{R}^n \to \mathbb{R}^D$ where $D < d$. 
For each input $v$ in the dataset $\mathcal{V}$, the model proposes a latent vector $z = N_\theta(v)$. 
This proposal is then completed into a full solution $\tilde{w} = \mathcal{T}(z) = [z, \phi_v(z)]^T$ via the equality projection. 
Subsequently, the solution is corrected toward inequality feasibility via $\hat{w} = \Pi(\tilde{w})$.
A regularized loss $f(\hat{w} \mid v)$ is then computed, and the parameters $\theta$ are updated using the gradient $\nabla_\theta f(\hat{w} \mid v)$. 
This entire pipeline remains differentiable, allowing the model to optimize its weights based on the final, feasible output rather than a raw, unconstrained prediction.

\begin{algorithm}
\caption{DC3 Training}\label{alg:DC3train}
\begin{algorithmic}[1]
\Procedure{Train}{$\mathcal{V}$}
    \State Initialize $N_\theta : \R^n \to \R^D$ \Comment{$D < d$}
    \While{not converged}
        \For{$v \in \mathcal{V}$}
            \State Propose $z = N_\theta(v)$
            \State Complete $\tilde{w} = \mathcal{T}(z) =  \begin{bmatrix} z ,  \phi_v(z) \end{bmatrix}^T$ \Comment{Project to equality constraints}
            \State Correct $\hat{w} = \Pi(\tilde{w})$ \Comment{Project to inequality constraints}
            \State Compute regularized loss $f(\hat{w} \mid v)$
            \State Update $\theta$ using $\nabla_\theta f(\hat{w} \mid v)$
        \EndFor
    \EndWhile
\EndProcedure
\end{algorithmic}
\end{algorithm}

\subsubsection{U-Nets, Multi-ResNet and Conditioned ResNets}

\begin{figure}[h]
    \centering
    % --- Diagram 1: Standard U-Net ---
    \begin{minipage}{0.32\textwidth}
        \centering
        \scalebox{0.5}{
        \begin{tikzpicture}[
            node distance=1.0cm and 1.5cm,
            font=\sffamily,
            >=Stealth,
            block/.style={rectangle, draw=black, very thick, minimum width=2.5cm, minimum height=1.2cm, rounded corners=2pt},
            smallblock/.style={rectangle, draw=black, thick, minimum width=1.5cm, minimum height=0.8cm, rounded corners=2pt},
            skip/.style={dashed, ->, thick},
            line/.style={draw, ->, thick}
        ]
        % --- Level 2 (Top) ---
        \node[block] (E2) {$E_2$};
        \node[right=0.2cm of E2] {$V_2$};
        \node[above=0.8cm of E2] (input) {};
        \node[right=0.1cm of input] {$v_2$};
        \node[block, left=3cm of E2] (D2) {$D_2$};
        \node[left=0.2cm of D2] {$W_2$};
        \node[above=0.8cm of D2] (output) {$U_2(v_2)$};

        % --- Level 1 (Middle) ---
        \node[smallblock, below=1cm of E2] (P1) {$P_1$};
        \node[block, below=1cm of P1] (E1) {$E_1$};
        \node[right=0.2cm of E1] {$V_1$};
        \node[block, left=3cm of E1] (D1) {$D_1$};
        \node[left=0.2cm of D1] {$W_1$};

        % --- Level 0 (Bottom) ---
        \node[smallblock, below=1cm of E1] (P0) {$P_0$};
        \node[block, left=1.5cm of P0, anchor=east] (U0) {$U_0$};
        \node[below=0.1cm of U0] {Bottleneck};
        \node[right=0.1cm of U0, yshift=-0.3cm] {$V_0$};
        \node[left=0.1cm of U0, yshift=-0.3cm] {$W_0$};

        % --- Connections ---
        \draw[line] (input) -- (E2);
        \draw[line] (E2) -- (P1);
        \draw[line] (P1) -- (E1);
        \draw[line] (E1) -- (P0);
        \draw[line] (P0.west) |- (U0.east);
        \draw[line] (U0.west) -| (D1.south);
        \draw[line] (D1) -- (D2);
        \draw[line] (D2) -- (output);
        \draw[skip] (E2) -- (D2) node[midway, above] {\scriptsize Conditioning};
        \draw[skip] (E1) -- (D1) node[midway, above] {\scriptsize Conditioning};
        \end{tikzpicture}
        }
        \caption{General U-Net \cite{ronneberger2015unet, williams2023unified}}
    \end{minipage}
    \hfill
    % --- Diagram 2: Multi-ResNet (Identity Encoder) ---
    \begin{minipage}{0.32\textwidth}
        \centering
        \scalebox{0.5}{
        \begin{tikzpicture}[
            node distance=1.0cm and 1.5cm,
            font=\sffamily,
            >=Stealth,
            block/.style={rectangle, draw=black, very thick, minimum width=2.5cm, minimum height=1.2cm, rounded corners=2pt},
            smallblock/.style={rectangle, draw=black, thick, minimum width=1.5cm, minimum height=0.8cm, rounded corners=2pt},
            placeholder/.style={rectangle, draw=none, minimum width=2.5cm, minimum height=1.2cm},
            skip/.style={dashed, ->, thick},
            line/.style={draw, ->, thick}
        ]
        % --- Level 2 (Top) ---
        \node[placeholder] (E2) {};
        \node[right=0.2cm of E2] {$V_2$};
        \node[above=0.8cm of E2] (input) {};
        \node[right=0.1cm of input] {$v_2$};
        \node[block, left=3cm of E2] (D2) {$D_2$};
        \node[left=0.2cm of D2] {$W_2$};
        \node[above=0.8cm of D2] (output) {$U_2(v_2)$};

        % --- Level 1 (Middle) ---
        \node[smallblock, below=1cm of E2] (P1) {$P_1$};
        \node[placeholder, below=1cm of P1] (E1) {};
        \node[right=0.2cm of E1] {$V_1$};
        \node[block, left=3cm of E1] (D1) {$D_1$};
        \node[left=0.2cm of D1] {$W_1$};

        % --- Level 0 (Bottom) ---
        \node[smallblock, below=1cm of E1] (P0) {$P_0$};
        \node[block, left=1.5cm of P0, anchor=east] (U0) {$U_0$};
        \node[below=0.1cm of U0] {Bottleneck};
        \node[right=0.1cm of U0, yshift=-0.3cm] {$V_0$};
        \node[left=0.1cm of U0, yshift=-0.3cm] {$W_0$};

        % --- Connections ---
        \draw[line] (input) -- (E2.center);
        \draw[line] (E2.center) -- (P1);
        \draw[line] (P1) -- (E1.center);
        \draw[line] (E1.center) -- (P0);
        \draw[line] (P0.west) |- (U0.east);
        \draw[line] (U0.west) -| (D1.south);
        \draw[line] (D1) -- (D2);
        \draw[line] (D2) -- (output);
        \draw[skip] (E2.center) -- (D2) node[midway, above] {\scriptsize Conditioning};
        \draw[skip] (E1.center) -- (D1) node[midway, above] {\scriptsize Conditioning};
        \filldraw [black] (E2.center) circle (2pt);
        \filldraw [black] (E1.center) circle (2pt);
        \end{tikzpicture}
        }
        \caption{Multi-ResNet (U-Net specified with Identity Encoder)}
    \end{minipage}
    \hfill
    % --- Diagram 3: DC3 Multi-ResNet (Fixed V) ---
    \begin{minipage}{0.32\textwidth}
        \centering
        \scalebox{0.5}{
        \begin{tikzpicture}[
            node distance=1.0cm and 1.5cm,
            font=\sffamily,
            >=Stealth,
            block/.style={rectangle, draw=black, very thick, minimum width=2.5cm, minimum height=1.2cm, rounded corners=2pt},
            placeholder/.style={circle, fill=black, inner sep=1.5pt},
            skip/.style={dashed, ->, thick},
            line/.style={draw, ->, thick}
        ]
        % --- Level 2 (Top) ---
        \node[placeholder] (E2) {};
        \node[right=0.2cm of E2] {$V$};
        \node[above=0.8cm of E2] (input) {};
        \node[right=0.1cm of input] {$v$ (Input)};
        \node[block, left=3cm of E2] (D2) {$N_2(z_1|v)$};
        \node[left=0.2cm of D2] {$W_2$};
        \node[above=0.8cm of D2] (output) {$w_2$};

        % --- Level 1 (Middle) ---
        \node[placeholder, below=2.5cm of E2] (E1) {};
        \node[right=0.2cm of E1] {$V$};
        \node[block, left=3cm of E1] (D1) {$N_1(z_0|v)$};
        \node[left=0.2cm of D1] {$W_1$};

        % --- Level 0 (Bottom) ---
        \node[block, below left=2.5cm and 1.5cm of E1, anchor=east] (U0) {$N_0(v)$};
        \node[below=0.1cm of U0] {Base Anchor};
        \node[left=0.1cm of U0, yshift=-0.3cm] {$W_0$};

        % --- Connections ---
        \draw[line] (input) -- (E2);
        \draw[line] (E2) -- (E1) node[midway, right] {Id};
        \draw[line] (E1) |- (U0.east) node[near end, above] {Id};
        \draw[line] (U0.west) -| (D1.south);
        \draw[line] (D1) -- (D2);
        \draw[line] (D2) -- (output);
        \draw[skip] (E2) -- (D2) node[midway, above] {\scriptsize Cond. on $v$};
        \draw[skip] (E1) -- (D1) node[midway, above] {\scriptsize Cond. on $v$};
        \end{tikzpicture}
        }
        \caption{Conditioned ResNet (Multi-ResNet specified with identity projections)}
        \label{fig:U-multi-resnet}
    \end{minipage}
\end{figure}

Formally, this architecture instantiates a U-Net for subspace preconditioning by fixing the encoder and projection operators to the identity map, thereby preserving the full input state at every refinement stage.
This is the simplest form of U-Net.
Currently we do not do any feature compression, but this would need further investigation. 

\subsection{Gaussian Process Approximation}
\label{appendix:GP-Approximation}

Consider a deep fully-connected neural network $N_{\theta}$ of depth $L$. Let $w_l$ denote the width of layer $l$, and $\phi(\cdot)$ be a coordinate-wise activation function. To ensure the existence of a stable limit as $w \to \infty$, we adopt the Neural Network Tangent Kernel (NTK) parameterization. The pre-activations $z^{(l)}$ and activations $h^{(l)}$ are defined recursively for $l \in \{1, \dots, L\}$ via the relation:
\begin{equation}
z^{(l+1)}(v) = \frac{\sigma_w}{\sqrt{w_l}} W^{(l)} h^{(l)}(v) + \sigma_b b^{(l)}, \quad h^{(l)}(v) = \phi(z^{(l)}(v))
\end{equation}
where the weights and biases within $\theta$ are initialized as i.i.d. standard Gaussians, $W_{ij}^{(l)} \sim \mathcal{N}(0, 1)$ and $b_i^{(l)} \sim \mathcal{N}(0, 1)$. The scaling factor $1/\sqrt{w_l}$ ensures that the variance of the pre-activations remains $O(1)$, preventing internal signals from vanishing or exploding as hidden layers grow infinitely wide.

As layer widths approach infinity, the behavior of the network simplifies; by the Central Limit Theorem, the pre-activations at each layer converge to a Gaussian Process. This implies that $N_{\theta}$, while non-linear in its parameters $\theta$, behaves as a linear model near its initialization $\theta_0$: $N_{\theta}(v) \approx N_{\theta_0}(v) + \nabla_\theta N_{\theta_0}(v)^\top (\theta - \theta_0)$. Furthermore, the Neural Tangent Kernel $\Theta^{(L)}(v, v') = \nabla_\theta N_{\theta}(v)^\top \nabla_\theta N_{\theta}(v')$ remains constant during training such that $\Theta_t \approx \Theta_0$. This lazy training regime ensures that the inductive bias established by the subspace preconditioning architecture is preserved, as the governing kernel does not deviate from its initial state.

At initialization ($t=0$), the network converges in distribution to a zero-mean Gaussian Process, $N_{\theta, 0}(\cdot) \xrightarrow{d} \mathcal{GP}(0, K^{(L)})$, where $K^{(L)}$ is the Neural Network Gaussian Process (NNGP) kernel, computed recursively: $K^{(l+1)}(v, v') = \sigma_w^2 \mathbb{E}_{z \sim \mathcal{N}(0, \Sigma^{(l)})} [\phi(z_1)\phi(z_2)] + \sigma_b^2$. Under gradient descent with squared loss, the evolution of predictions $N_{\theta, t}$ for a test point $v$ given training data $(\mathcal{V}, \mathcal{W})$ follows:
\begin{equation}
\mu_t(v) = \Theta(v, \mathcal{V}) \Theta(\mathcal{V}, \mathcal{V})^{-1} (I - e^{-\eta \Theta(\mathcal{V}, \mathcal{V})t}) \mathcal{W}
\end{equation}
This confirms that in the infinite-width limit, the network is equivalent to kernel regression. By characterizing subspace preconditioning updates here, we treat recursive residual steps as deterministic kernel mappings, ensuring stable structural inductive bias.

\subsubsection{Extension to Graph Neural Networks (GNNs)}
\label{appendix:GNN-GP-Approximation}

The infinite-width limit can be extended to Graph Neural Networks, specifically Message Passing Neural Networks (MPNNs). Let $G = (\mathcal{V}, \mathcal{E})$ be a graph where each node $u \in \mathcal{V}$ has an initial feature vector $v_u$. A GNN layer $l$ updates node representations $h_u^{(l)}$ by aggregating features from the neighborhood $\mathcal{N}(u)$. Following the GNTK parameterization, the message passing and update steps are defined as:
\begin{align}
    a_u^{(l)} = \text{AGGREGATE}^{(l)}\left(\left\{h_j^{(l-1)} : j \in \mathcal{N}(u)\right\}\right), \quad z_u^{(l)} = \frac{\sigma_w}{\sqrt{n_l}} W^{(l)} a_u^{(l)} + \sigma_b b^{(l)}
\end{align}
where $W^{(l)}$ and $b^{(l)}$ are initialized as i.i.d. standard Gaussians. The choice of the aggregation function affects the scaling and resulting kernel structure. As the number of hidden channels $n_l \to \infty$, the representation of each node converges to a Gaussian Process. Unlike standard MLPs, the covariance between two graphs $G$ and $G'$ depends on the structural alignment of their nodes. The Graph Gaussian Process (GGP) kernel $K_{GGP}(G, G')$ is computed via a recursive neighborhood aggregation of the base NNGP kernels:
\begin{align}
    K_{GGP}^{(l)}(u, j) = \sigma_w^2 \mathbb{E}_{z \sim \mathcal{N}(0, \Sigma_{uj}^{(l-1)})} [\phi(z_u)\phi(z_j)] + \sigma_b^2
\end{align}
where $\Sigma_{uj}^{(l-1)}$ captures the covariance between nodes $u \in G$ and $j \in G'$ from the previous layer. Consequently, the global graph representation $f(G; \theta)$ converges in distribution to a Gaussian Process, $f(G; \theta) \xrightarrow{d} \mathcal{GP}(0, \Theta_{GNTK}(G, G'))$, where $\Theta_{GNTK}$ is the Graph Neural Tangent Kernel. This kernel describes GNN training dynamics under gradient descent and is computed by accumulating the contributions of the GGP kernel and its derivatives across message-passing steps:
\begin{align}
    \Theta_{GNTK}^{(l)}(G, G') = \sum_{u \in \mathcal{V}, j \in \mathcal{V}'} \Theta^{(l)}(u, j)
\end{align}
This allows for the exact characterization of GNN generalization on graph-structured data through kernel regression, effectively treating the GNN as a linear model over a structural feature space.

\subsection{Additional Experiments}
\label{sec:appendix-experiments}

\subsubsection{DC3 Correction Sensitivity Analysis}
\label{sec:appendix-dc3-analysis}

On line-flow-constrained ACOPF, vanilla DC3 does not realize its intended predict-and-correct division of labor.
We analyze DC3's correction effectiveness at $\alpha_S{=}0.5$ (the most congested regime) by comparing post-completion / pre-correction and post-correction violations at different correction learning rates.

\begin{table}[H]
\centering
\caption{DC3 correction effectiveness ($\gamma{=}0.01$, $\alpha_S{=}0.5$). T2 reduction percentage shown in the Regime column. At weak correction ($\leq 10^{-5}$), correction is inert (0--0.3\% T2 reduction). At active correction ($10^{-4}$), correction reduces T2 by 43\% but drifts off the equality manifold (Eq=0.151).}
\label{tab:dc3-sensitivity}
\begin{tabular}{lcccccl}
\hline
$\eta_{\text{corr}}$ & Pre-T1 & Post-T1 & Pre-T2 & Post-T2 & Post-Eq & Regime \\ \hline
$10^{-7}$ & 0.0081 & 0.0081 & 1.486 & 1.486 & 0 & Inert (0\%) \\
$10^{-5}$ & 0.0070 & 0.0086 & 1.486 & 1.482 & 0 & Inert (0.3\%) \\
$10^{-4}$ & 1.894 & 0.022 & 2.453 & 1.407 & \textbf{0.151} & Drifts (43\%) \\ \hline
\end{tabular}
\end{table}

At weak correction ($\leq 10^{-5}$), the correction operator does essentially nothing---the network alone produces the final answer, and DC3's intended division of labor is not operating.
At the active correction rate ($10^{-4}$, the default in the released DC3 codebase for ACOPF), correction reduces T2 by 43\%, but the network has learned to produce poor starting points (Pre-T2=2.45 vs.\ 1.49 at weak lr) and the correction drifts off-manifold (Eq=0.151).

DC3's equality drift also scales with congestion at the active learning rate: Eq = $0.010$ ($\alpha_S{=}1.0$) $\to$ $0.050$ ($\alpha_S{=}0.7$) $\to$ $0.152$ ($\alpha_S{=}0.5$).
This confirms that DC3 has no usable active-correction regime on nonlinear ACOPF with thermal line flow constraints.

\subsubsection{DC3+Manifold-Aware Ablation}
\label{sec:appendix-dc3-ma}

DC3+manifold\_aware projects correction gradients onto the tangent space of the equality manifold before each step, but does not re-project the solution back onto the manifold.
Its drift behavior depends on the objective weight $\gamma$:

\begin{table}[H]
\centering
\caption{DC3+manifold-aware equality drift. (a) At $\gamma{=}0.01$ (aggressive correction), MA makes drift worse. (b) At $\gamma{=}1.0$ (gentle correction), MA reduces drift.}
\label{tab:dc3-ma}
\begin{subtable}[t]{0.45\textwidth}
\centering
\caption{$\gamma = 0.01$ (MA worse)}
\begin{tabular}{lcc}
\hline
$\alpha_S$ & DC3 & DC3+MA \\ \hline
1.0 & 0.0098 & 0.3189 \\
0.7 & 0.0501 & 0.4447 \\
0.5 & 0.1523 & 0.5140 \\ \hline
\end{tabular}
\end{subtable}
\hfill
\begin{subtable}[t]{0.45\textwidth}
\centering
\caption{$\gamma = 1.0$ (MA better)}
\begin{tabular}{lcc}
\hline
$\alpha_S$ & DC3 & DC3+MA \\ \hline
1.0 & 0.8556 & 0.2198 \\
0.7 & 0.9860 & 0.0129 \\
0.5 & 0.8791 & 0.1060 \\ \hline
\end{tabular}
\end{subtable}
\end{table}

At $\gamma{=}1.0$ (gentle corrections), the manifold-aware gradient is a good approximation and helps reduce drift.
At $\gamma{=}0.01$ (aggressive corrections, used in our main experiments), gradient mismatch compounds across steps and makes drift worse.
This confirms that tangent-space projection alone is insufficient for manifold preservation on nonlinear ACOPF.

\subsubsection{IEEE 57-Bus Generalization}
\label{sec:appendix-57bus}

To confirm the drift finding is not specific to the 30-bus system, we evaluate DC3 and MResOpt on IEEE 57-bus ACOPF (128 vars, 114 eq, 302 ineq) with PGLib-calibrated thermal limits \cite{babaeinejadsarookolaee2019pglib}.
The original DC3/MATPOWER case57 has dummy $\alpha_S = 99.0$ (i.e., no binding thermal limits); we inject PGLib's branch ratings (25--1617 MVA across 80 branches).
Same settings: $\gamma{=}0.01$, $\eta_{\text{corr}}$=$10^{-4}$, 3 seeds.

Feasibility by $\alpha_S$: 1.0 = 100\%, 0.5 = 99\%, 0.3 = 6\%.

\begin{table}[H]
\centering
\caption{IEEE 57-bus ACOPF with PGLib thermal limits (3 seeds, $\eta_{\text{corr}}$=$10^{-4}$, $\gamma{=}0.01$).}
\label{tab:57bus}
\begin{tabular}{llcccc}
\hline
$\alpha_S$ & Method & OG (\%) & Equality (p.u.) & Tier-1 (p.u.) & Tier-2 (p.u.) \\ \hline
1.0 & DC3 & $+0.98 \pm 0.01$ & 0.0011 & $0.0057 \pm 0.000$ & $0.0000$ \\
    & DC3+recomp & $+0.93 \pm 0.08$ & $8.33\text{e-}08$ & $0.0077 \pm 0.000$ & $0.0000$ \\
    & MResOpt & $+1.32 \pm 0.07$ & $1.81\text{e-}05$ & $\mathbf{0.0039 \pm 0.000}$ & $0.0000$ \\ \hline
0.5 & DC3 & $3.82 \pm 0.00$ & 0.0011 & $0.0060 \pm 0.000$ & $\mathbf{0.0005 \pm 0.000}$ \\
    & DC3+recomp & $3.82 \pm 0.00$ & $7.20\text{e-}08$ & $0.0072 \pm 0.000$ & $0.0006 \pm 0.000$ \\
    & MResOpt & $3.84 \pm 0.01$ & $8.28\text{e-}06$ & $\mathbf{0.0042 \pm 0.000}$ & $0.0006 \pm 0.000$ \\ \hline
0.3 & DC3 & $3.90 \pm 0.00$ & 0.0118 & $0.0252 \pm 0.000$ & $\mathbf{0.1034 \pm 0.001}$ \\
    & DC3+recomp & $3.89 \pm 0.01$ & $3.02\text{e-}07$ & $0.0238 \pm 0.001$ & $0.1082 \pm 0.002$ \\
    & MResOpt & $3.96 \pm 0.01$ & $1.62\text{e-}05$ & $\mathbf{0.0177 \pm 0.001}$ & $0.1186 \pm 0.002$ \\ \hline
\end{tabular}
\end{table}

DC3 drifts on 57-bus too (Eq = 0.0011 at $\alpha_S{=}1.0$, scaling to 0.0118 at $\alpha_S{=}0.3$), while MResOpt stays on-manifold (Eq $\sim 10^{-5}$).
MResOpt wins Tier-1 at every congestion level (1.4--1.5$\times$).
The same pattern as 30-bus confirms that the drift phenomenon generalizes across bus systems with independently calibrated thermal limits.
DC3's drift is smaller on 57-bus than 30-bus (0.001--0.012 vs.\ 0.010--0.152), likely because PGLib's 57-bus thermal limits are less tight relative to the operating point.

\paragraph{DC3+recomp ablation.} Adding Newton power flow re-completion after each correction step (DC3+recomp) collapses the equality drift by 4--5 orders of magnitude relative to plain DC3 across all congestion levels (e.g., $1.18 \times 10^{-2} \to 3.02 \times 10^{-7}$ at $\alpha_S{=}0.3$), confirming that re-completion is the ingredient that restores manifold satisfaction. The effect on Tier-1 is more subtle and congestion-dependent: at low congestion ($\alpha_S{=}1.0/0.5$), re-completion actually \emph{worsens} Tier-1 over plain DC3 ($0.0057 \to 0.0077$ and $0.0060 \to 0.0072$, a $20$--$35\%$ increase), because re-projecting onto the equality manifold after each step pushes the iterate more firmly against the box bounds, whereas plain DC3's drift implicitly cushions those violations. Only in the heavily congested regime ($\alpha_S{=}0.3$) does re-completion help Tier-1 ($0.0252 \to 0.0238$, a $5.5\%$ reduction), where the off-manifold drift had begun to compound into the box hits. By contrast, MResOpt's staged tier decomposition wins Tier-1 \emph{unambiguously at every $\alpha_S$} ($1.3$--$2.0\times$ smaller than DC3+recomp). The ablation thus isolates two complementary ingredients with different roles: re-completion is required for Eq satisfaction but does not consistently improve Tier-1, while the staged tier decomposition is what produces the structural Tier-1 gain. MResOpt achieves both --- sub-$10^{-5}$ Eq drift \emph{and} the lowest Tier-1 across every congestion level. Optimality gap is comparable across all three methods at each $\alpha_S$.

\subsubsection{3-Bus ACOPF: Controlled Example and Ordering Ablation}
\label{sec:appendix-ordering-ablation}

We construct a minimal 3-bus ACOPF system (2 generators, 3 branches) to visualize the nested feasibility structure and ablate the constraint ordering.
Figure~\ref{fig:3bus_3d_appendix} shows the maximum line flow as a surface over the generator dispatch space $(P_{g_2}, V_{m_2})$, with horizontal planes at three thermal limits $\bar{S}$.
As $\bar{S}$ decreases, the feasible region $W_3$ shrinks and eventually vanishes ($\bar{S} = 38$ MVA), illustrating the overconstrained regime.

\begin{figure}[H]
\centering
\includegraphics[width=0.7\textwidth]{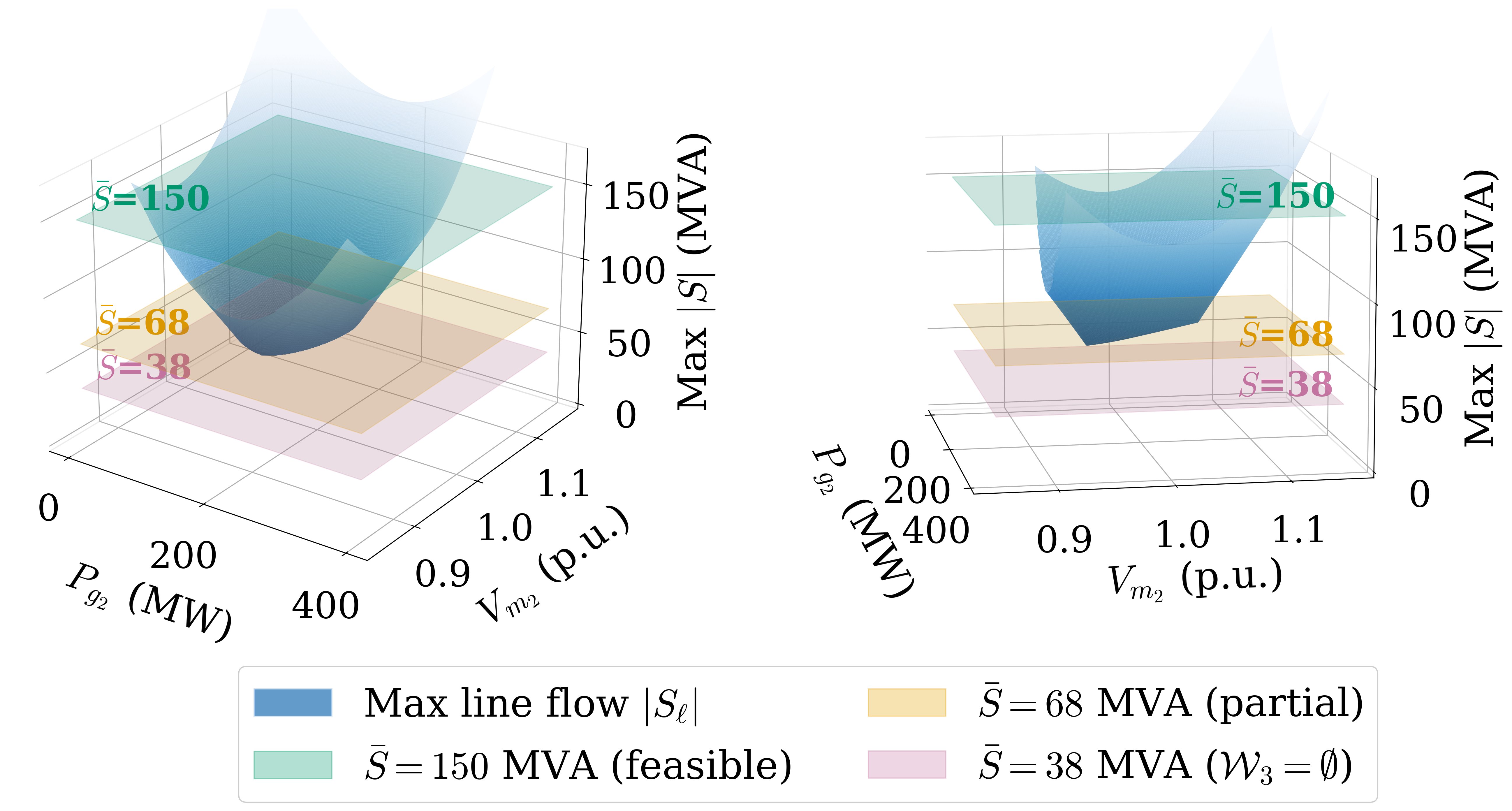}
\caption{3-bus ACOPF feasibility. \textbf{Left}: front view. \textbf{Right}: side view. Blue surface: max line flow $|S_\ell|$ over $(P_{g_2}, V_{m_2})$. Horizontal planes show thermal limits: $\bar{S}=150$ MVA (teal) intersects ($W_3$ exists); $\bar{S}=68$ MVA (amber) leaves a small feasible region; $\bar{S}=38$ MVA (pink) is infeasible ($W_3=\emptyset$).}
\label{fig:3bus_3d_appendix}
\end{figure}

We train DC3, MResOpt, and MResOpt-det on this system with tightened bounds ($P_g \in [80, 200]$ MW, $Q_g \in [-30, 80]$ MVAr, $V_m \in [0.95, 1.05]$ p.u.) at $\bar{S} = 38$ MVA (fully infeasible for line flows).
All models use 2 layers, 16 hidden units, $\eta_{\text{corr}} = 10^{-4}$, 500 epochs, 2000 training samples.

\begin{table}[H]
\centering
\caption{3-bus ACOPF results (mean $\pm$ std over 3 seeds). Equality violations are near-zero for all methods ($\sim\!10^{-7}$). MResOpt-det preserves Tier-1 $3.1\times$ better than DC3.}
\label{tab:3bus_appendix}
\begin{tabular}{lccc}
\hline
Method & T1 (box) & T2 (flow) \\ \hline
DC3 & $0.0054 \pm 0.0012$ & $\mathbf{0.558 \pm 0.003}$ \\
MResOpt & $0.0021 \pm 0.0002$ & $0.557 \pm 0.003$ \\
MResOpt-det & $\mathbf{0.0017 \pm 0.0002}$ & $0.574 \pm 0.003$ \\ \hline
\end{tabular}
\end{table}

\paragraph{Ordering ablation.}
To test whether the constraint ordering matters, we reverse the tier assignment: Tier-1 = line flows, Tier-2 = box bounds (opposite of the physics-motivated ordering).

\begin{table}[H]
\centering
\caption{3-bus ACOPF with reversed tier ordering (T1 = line flows, T2 = box bounds). All methods show T1 $\approx 0.555$ --- no separation. The reversed ordering eliminates MResOpt's advantage because the ``easy'' tier (box bounds) is now in Stage~2 while the impossible constraint (line flows at $\bar{S} = 38$ MVA) is in Stage~1.}
\label{tab:3bus_reversed}
\begin{tabular}{lccc}
\hline
Method & T1 (flow) & T2 (box) & Eq \\ \hline
DC3 & 0.554 & 0.005 & $4.8 \times 10^{-5}$ \\
MResOpt & 0.555 & 0.016 & $5.8 \times 10^{-6}$ \\
MResOpt-det & 0.555 & 0.242 & $1.45$ (diverges) \\ \hline
\end{tabular}
\end{table}

With the reversed ordering, all methods achieve T1 $\approx 0.555$ (identical line flow violation) --- the hierarchy provides no benefit because Stage~1 is asked to solve the infeasible constraint first.
Moreover, MResOpt-det diverges entirely (Eq = 1.45), confirming that the detach mechanism is sensitive to the ordering: blocking gradient flow from an infeasible tier is harmful when that tier is placed first.
This ablation supports the domain-knowledge-driven ordering: placing simpler, solvable constraints (box bounds) in Stage~1 gives the correction a smooth landscape to operate on, while deferring complex constraints (line flows) to Stage~2.

\subsubsection{DC3's Original ACOPF Setting}
\label{sec:appendix-dc3-easy}

The original DC3 paper \cite{donti2021dc3} evaluated on IEEE 57-bus ACOPF using only generator and voltage box bounds (no thermal line-flow limits).
In this setting, all methods perform near-identically:

\begin{table}[H]
\centering
\caption{57-bus ACOPF without line flows (DC3's original setting). All methods achieve near-identical OG and small violations.}
\label{tab:dc3-easy}
\begin{tabular}{lcccc}
\hline
Method & Eq & Gen-bound & Voltage-bound & OG \\ \hline
DC3 & $5.0 \times 10^{-11}$ & $3.0 \times 10^{-3}$ & $6.2 \times 10^{-3}$ & $-0.10\%$ \\
MResOpt & $5.5 \times 10^{-8}$ & $7.9 \times 10^{-4}$ & $1.9 \times 10^{-2}$ & $-0.09\%$ \\
MResOpt-det & $1.1 \times 10^{-7}$ & $2.2 \times 10^{-3}$ & $7.9 \times 10^{-2}$ & $-0.09\%$ \\ \hline
\end{tabular}
\end{table}

Without line flow constraints, the constraint landscape is smooth enough for a single correction pass --- DC3's correction is sufficient because the feasible region does not exhibit the complex, non-smooth geometry that creates basin trapping on the full ACOPF.
Adding 82 line flow constraints (as in our main experiments) creates the nonlinear tension where DC3's correction drifts off the equality manifold.

\subsubsection{DCOPF and ACOPF Without Line Flows}
\label{sec:appendix-easy-problems}

Two ablations confirm that the staged mechanism provides its advantage when the constraint landscape is complex and non-smooth, creating multiple basins that a single correction pass cannot navigate. When the landscape is simple, decomposition into subproblems offers no benefit.

\paragraph{DCOPF (IEEE 30-bus, linear).} DCOPF is a linear QP (35 vars, 82 ineq, 30 eq). T1 = generator bounds, T2 = line flow limits.

\begin{table}[H]
\centering
\caption{DCOPF results: all methods achieve T1=0. The smooth, convex landscape admits a single correction pass without basin trapping.}
\label{tab:dcopf}
\begin{tabular}{llcccc}
\hline
Setting & Method & Eq & T1 & T2 & OG \\ \hline
Congested ($\alpha_S{=}0.3$) & DC3 & $2.3 \times 10^{-14}$ & $0.0$ & $7.6 \times 10^{-3}$ & $+0.02\%$ \\
 & MResOpt & $1.7 \times 10^{-14}$ & $0.0$ & $5.6 \times 10^{-3}$ & $-0.02\%$ \\
 & MResOpt-det & $1.6 \times 10^{-14}$ & $0.0$ & $4.0 \times 10^{-3}$ & $-0.03\%$ \\ \hline
Regular ($\alpha_S{=}1.0$) & DC3 & $1.3 \times 10^{-14}$ & $0.0$ & $0.0$ & $0.0\%$ \\
 & MResOpt & $1.3 \times 10^{-14}$ & $0.0$ & $0.0$ & $0.0\%$ \\
 & MResOpt-det & $1.3 \times 10^{-14}$ & $0.0$ & $0.0$ & $0.0\%$ \\ \hline
\end{tabular}
\end{table}

When the constraint landscape is smooth and convex, a single correction pass suffices --- the staged decomposition provides no additional benefit.

\subsection{Implementation Details}
\label{sec:appendix-implementation}

\paragraph{Architecture.}
DC3 uses an MLP with 1024 hidden units, 4 layers, SiLU activation, and sigmoid output ($\sim$3.25M parameters).
MResOpt uses a two-stage residual MLP: Stage~1 has 850 hidden units, 3 layers, sigmoid output; Stage~2 has 850 hidden units, 3 layers, tanh activation for the residual. The composition is $z_2 = \sigma(\text{logit}(z_1) + 0.1 \cdot \delta)$ where $\delta$ is Stage~2's output, initialized to zero ($\sim$3.11M parameters).

\paragraph{Training.}
AdamW optimizer with weight decay $10^{-3}$.
Learning rate: $5 \times 10^{-4}$ (synthetics), $2 \times 10^{-4}$ (ACOPF), with StepLR decay.
Batch size: 512.
Epochs: 500 (synthetics), 200--300 (ACOPF).
ACOPF correction: $\eta_{\text{corr}} = 10^{-4}$, momentum 0.5, 5+5 steps (Stage~1 + Stage~2).

\paragraph{Datasets.}
Synthetics: $d = 100$ variables, $m_{\text{eq}} = 50$, $m_{\text{ineq}} = 100$, 10{,}000 samples (following FSNet \cite{nguyen2025fsnet}).
ACOPF: IEEE 30-bus PGLearn benchmark \cite{klamkin2025pglearn}, 72 variables, 60 equality constraints, 166 inequality constraints (84 box + 82 line flow), 10{,}000 samples split 7{,}000/1{,}000/2{,}000 (train/val/test).

\paragraph{Compute.}
All experiments were run on a single NVIDIA A100 80GB GPU. Synthetic problems train in roughly 20 minutes per run (500 epochs, $\sim$2.5s/epoch). IEEE 30-bus ACOPF training takes approximately 50 minutes per run (300 epochs, $\sim$10s/epoch). IEEE 57-bus ACOPF training time depends on the method: vanilla DC3 finishes in a few minutes ($\sim$2.5s/epoch over 100 epochs), while DC3+recomp takes approximately 3 hours per run ($\sim$60s/epoch over 200 epochs), because DC3+recomp performs 10 Newton power-flow calls per sample versus 1 for DC3. Per-sample inference times are reported in Section 5.2.

\subsection{Synthetic Problem Formulations}
\label{sec:appendix-synthetic-formulations}

We use the synthetic problem suite from FSNet \cite{nguyen2025fsnet}. Each problem has $n = 200$ input parameters and $d = 100$ decision variables with bounds $y \in [-5, 5]^d$.

\paragraph{Convex problems.}
\begin{align}
\textbf{QP:}\quad &\min_y \tfrac{1}{2} y^\top Q y + p^\top y \quad \text{s.t.}\quad Ay = x,\; Gy \le h,\; L \le y \le U \\
\textbf{QCQP:}\quad &\min_y \tfrac{1}{2} y^\top Q y + p^\top y \quad \text{s.t.}\quad Ay = x,\; y^\top H_i y + g_i^\top y \le h_i,\; L \le y \le U \\
\textbf{SOCP:}\quad &\min_y \tfrac{1}{2} y^\top Q y + p^\top y \quad \text{s.t.}\quad Ay = x,\; \|G_i y + h_i\|_2 \le c_i^\top y + d_i,\; L \le y \le U
\end{align}

\paragraph{Nonconvex problems.} Constructed by applying element-wise sine/cosine transforms:
\begin{align}
\textbf{NC-QP:}\quad &\min_y \tfrac{1}{2} y^\top Q y + p^\top \sin(y) \quad \text{s.t.}\quad Ay = x,\; G \sin(y) \le h \cos(x),\; L \le y \le U
\end{align}
NC-QCQP and NC-SOCP follow analogously (see FSNet \cite{nguyen2025fsnet} Appendix A.2).

\paragraph{Tier split.}
Tier-1 contains the first 50 main inequalities plus all $2d = 400$ box bound constraints.
Tier-2 contains the remaining 50 main inequalities.
Box bounds are always in Tier-1 because they represent hard variable feasibility limits.

\end{document}